\documentclass[journal]{IEEEtran}
\usepackage{tabularx}
\usepackage{soul}
\usepackage{url}
\usepackage[small]{caption}
\usepackage{graphicx}
\usepackage{float}
\usepackage{amsmath}
\usepackage{booktabs}
\usepackage{subfigure}
\usepackage{algorithm}
\usepackage{algorithmic}
\usepackage{amssymb}
\usepackage{amsmath}
\usepackage{multirow}
\usepackage{comment}
\usepackage{makecell}
\usepackage{bm}
\usepackage{color}
\usepackage{amsthm}

\newtheorem{theorem}{Theorem}

\begin{document}
\title{AGNN: Alternating Graph-Regularized Neural Networks to Alleviate Over-Smoothing}

\author{Zhaoliang Chen,
        Zhihao Wu,
        Zhenghong Lin,
        Shiping Wang,
        Claudia Plant and
        Wenzhong Guo
\thanks{This work is in part supported by the National Natural Science Foundation of China (Grant No. U21A20472), the Natural Science Foundation of Fujian Province (Grant No. 2020J01130193). Corresponding author: Wenzhong Guo.}
\thanks{Zhaoliang Chen, Zhihao Wu, Zhenghong Lin, Shiping Wang and Wenzhong Guo are with the College of Computer and Data Science, Fuzhou University, Fuzhou 350116, China and also with the Fujian Provincial Key Laboratory of Network Computing and Intelligent Information Processing, Fuzhou University, Fuzhou 350116, China (email: chenzl23@outlook.com, zhihaowu1999@gmail.com, hongzhenglin970323@gmail.com, shipingwangphd@163.com, guowenzhong@fzu.edu.cn).

Claudia Plant is with the Faculty of Computer Science and with the research network Data Science @ Uni Vienna, University of Vienna, 1090 Vienna, Austria (email: claudia.plant@univie.ac.at).
}
}

\markboth{IEEE Transactions on Neural Networks and Learning Systems}%
{Shell \MakeLowercase{\textit{et al.}}: Bare Demo of IEEEtran.cls for IEEE Journals}

\maketitle

\begin{abstract}
    Graph Convolutional Network (GCN) with the powerful capacity to explore graph-structural data has gained noticeable success in recent years.
    Nonetheless, most of the existing GCN-based models suffer from the notorious over-smoothing issue, owing to which shallow networks are extensively adopted.
    This may be problematic for complex graph datasets because a deeper GCN should be beneficial to propagating information across remote neighbors.
    Recent works have devoted effort to addressing over-smoothing problems, including establishing residual connection structure or fusing predictions from multi-layer models. 
    Because of the indistinguishable embeddings from deep layers, it is reasonable to generate more reliable predictions before conducting the combination of outputs from various layers.
    In light of this, we propose an Alternating Graph-regularized Neural Network (AGNN) composed of Graph Convolutional Layer (GCL) and Graph Embedding Layer (GEL).
    GEL is derived from the graph-regularized optimization containing Laplacian embedding term, which can alleviate the over-smoothing problem by periodic projection from the low-order feature space onto the high-order space.
    With more distinguishable features of distinct layers, an improved Adaboost strategy is utilized to aggregate outputs from each layer, which explores integrated embeddings of multi-hop neighbors.
    The proposed model is evaluated via a large number of experiments including performance comparison with some multi-layer or multi-order graph neural networks, which reveals the superior performance improvement of AGNN compared with state-of-the-art models.
\end{abstract}
\begin{IEEEkeywords}
Graph convolutional network, semi-supervised classification, over-smoothing, graph representation learning.
\end{IEEEkeywords}

\IEEEpeerreviewmaketitle

\section{Introduction}
Graph Neural Network (GNN) has become one of the promising technologies manipulating graph-structural data in recent years, 
obtaining remarkable achievement in various pattern recognition fields, including node classification or clustering \cite{zhang2021shne,ZhongW0HDNL021,li2020graph}, recommender systems \cite{YanhuiHybrid2022,wang2021dualgnn,deng2022g} and computer vision \cite{YangLLZWL20,XuWYHS21,XuHQXHH21}.
As one of the typical GNN-based models, Graph Convolutional Network (GCN) is receiving plentiful attention from a population of researchers \cite{KipfW17,BaiCJRH22}.
Owing to its powerful ability to extract knowledge from sparse weighted networks, GCN has also been adopted to weight prediction for sparse weighted graphs, such as dynamic graphs \cite{ShangYLZ22,9647958,9839318}.
Originated from GCN, Graph AutoEncoder (GAE) was also investigated to conduct weighted link predictions via the reconstruction of the adjacency matrix \cite{9889163,chen2020general,CuiZY020}.
GCN propagates node representations across topology networks via convolution operators on non-Euclidean space, which integrates node features and relationships involved in a graph.
Nonetheless, recent practice and theoretical analysis have indicated that a 2-layer GCN generally performs the best, and a deep GCN often leads to unfavorable performance, which is summarized into the over-smoothing issue.

\begin{figure}[!tbp]
    \centering
    \includegraphics[width=0.48\textwidth]{./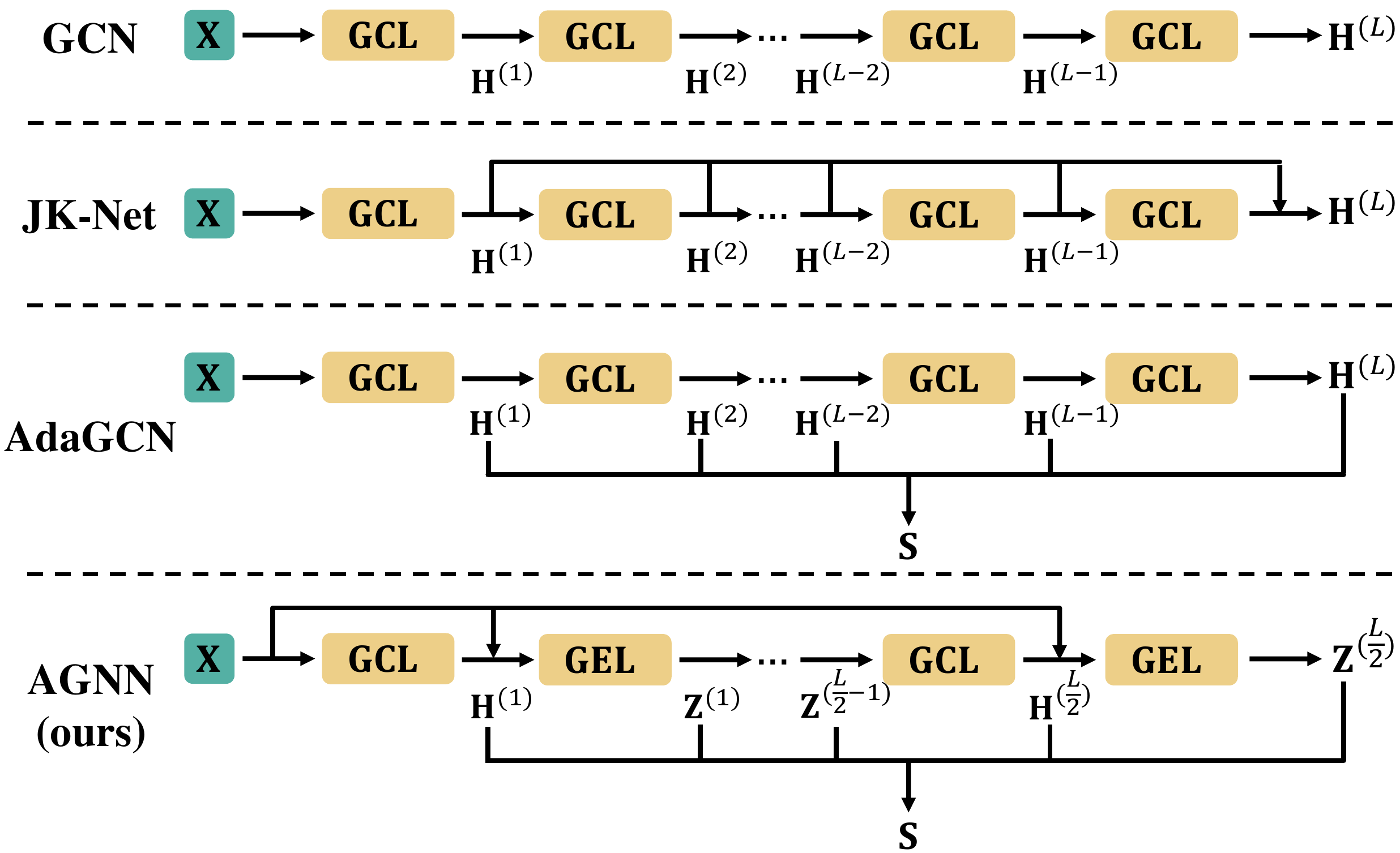}
    \caption{Architectures of numerous GCN-based methods and the proposed AGNN, where GCL is Graph Convolutional Layer and GEL is the proposed Graph Embedding Layer.
    GCN \cite{MinWW20} is a sequence of GCL, which encounters severe over-smoothing with deep layers. JK-Net \cite{XuLTSKJ18} adds connections among layers to carry all low-order information to the last layer. AdaGCN \cite{SunZL21} aggregates multi-hop embeddings of all layers. The proposed AGNN simultaneously carries low-order features to deep layers and accumulates node predictions from all layers.
    }
    \label{MethodCompare}
\end{figure}

Over-smoothing is a widely concerned deficiency of GCN, which has been extensively investigated.
Recent studies have proved that a graph convolution is exactly a special form of Laplacian smoothing, attributed to which a deeper GCN may result in indistinguishable node features and make the downstream classification tasks challenging \cite{LiHW18,eliasof2021pde,ChenZXMLZJH21}.
It makes most existing GCN-based models shallow and lack the ability to mine knowledge from high-order neighbors, which is more severe for datasets with high-degree nodes.
Considerable works have been devoted to solving this problem.
On one hand, some research attempted to consider a similar structure of residual connection leveraged in Euclidean deep convolutional networks \cite{XuLTSKJ18,ChenWHDL20,Li0TG19}.
Most of these methods made full use of embeddings from the previous layers or input matrix to avoid information loss.
On the other hand, some studies placed more emphases on effective exploration and combination of hidden representations from different hops of neighbors \cite{SunZL21,ChenZXMLZJH21,KlicperaBG19}.
A summary of the comparison between the representative algorithms (GCN \cite{KipfW17}, JK-Net \cite{XuLTSKJ18}, AdaGCN \cite{SunZL21}) and the proposed method in this paper are shown in Figure \ref{MethodCompare}.
Although some works have succeeded in relieving over-smoothing problems, they were still outperformed by a classical 2-layer GCN.
In addition, a direct linear combination of embeddings from hidden layers may not work effectively, because the similar and indistinguishable features from deeper layers can annihilate useful information from shallow layers and confound the predictions of classifiers.
Accordingly, it is crucial to develop a reliable network where each layer can yield accurate and distinguishable outputs before conducting the prediction fusion.

In pursuit of addressing the aforementioned problems, in this paper, we design an Alternating Graph-regularized Neural Network (AGNN) that enables the construction of deep layer architecture.
AGNN alternately performs forward computation of Graph Convolutional Layer (GCL) and Graph Embedding Layer (GEL).
In order to get rid of similar and indistinguishable features caused by over-smoothing, GEL is designed to project original node embeddings onto low-dimensional space in deep layers and preserve critical features via sparse outputs.
Thus, each proposed GEL aims to learn Laplacian-constrained sparse representations from original features, on the basis of the optimization problem w.r.t. the Laplacian-based graph regularization and sparsity constraint.
We derive the updating rules of this optimization target and transform them into GEL that preserves discriminative node embeddings during network training and alleviates the over-smoothing problem.
We analyze the network architecture and draw a conclusion that both GCL and GEL can be approximately regarded as solutions to distinct graph regularization problems.
Furthermore, with more accurate predictions yielded by GCL and GEL, an improved Adaboost algorithm is adopted to aggregate node representations from varying hidden layers, so that multi-order information from different depths of networks can be leveraged.
In summary, the contributions of this paper primarily lie in:

1) According to a graph-regularized optimization problem and its iterative solutions, we construct a new layer dubbed GEL, which can alleviate over-smoothing phenomenon via carrying low-order information to deep layers.

2) A graph-regularized neural network with alternating GCLs and GELs is proposed, which adopts both residual connection and embedding aggregation architecture. Its layers can be regarded as approximations of different graph optimization problems, which promote the interpretability of the model.

3) With more accurate embeddings yielded by deep layers, an improved Adaboost algorithm is designed to leverage features from distinct hidden layers, enabling the model to aggregate high-quality node representations from multi-hop neighbor propagation.

4) Substantial experimental results reveal the superiority of the proposed AGNN, which succeeds in coping with over-smoothing issue and outperforms the widely applied 2-layer GCN and other multi-layer GCN-based methods with deep network structures.

The rest contents of this paper are organized as follows.
Recent works of GCN and approaches to cope with the over-smoothing issues are discussed in Section \ref{relatedwork}.
In Section \ref{AGNN}, we elaborate on the proposed framework, including detailed analysis and comparison between AGNN and other models.
We evaluate AGNN with comprehensive experiments in Section \ref{Experiments}, looking into the performance under varying experimental settings.
Eventually, we conclude our works in Section \ref{Conclusion}.

\section{Related Works}\label{relatedwork}
\subsection{Graph Convolutional Network}
GCN has been applied to a multitude of applications and attracted attention from a wide range of researchers in recent years.
Xu et al. came up with a deep feature aggregation model with a graph convolutional network to conduct high spatial resolution scene classification \cite{xu2021deep}.
A GCN-based approach under the autoencoder framework was proposed to perform unsupervised community detection \cite{HeSJ0ZYZ20}.
In order to reduce the computational cost of graph convolutions, a low-pass collaborative filter was proposed to utilize GCN with a large graph \cite{YuQ20}. 
Gan et al. designed a multi-graph fusion model that combined the local graph and the global graph to produce a high-quality graph for GCN \cite{Gan2022Multigraph}.
An aggregation scheme was applied to promote the robustness of GCN against structural attacks \cite{ChenLPLZY21}.
Geometric scattering transformations and residual convolutions were leveraged to enhance the conventional GCN \cite{MinWW20}.
Xu et al. presented a spatiotemporal multi-graph convolutional fusion network, which exploited the graph-structural road network for urban vehicle emission estimation \cite{xu2020spatiotemporal}.
GCN with a question-aware gating mechanism was presented to aggregate evidences on the path-based graph \cite{TangSMXYL20}.
A new graph convolution operator was proposed to obtain robust embeddings in the spectral domain \cite{0002C0S21}.
The variant of GCN was derived via a modified Markov diffusion kernel, which explored the global and local contexts of nodes \cite{ZhuK21}.
Weighted link prediction is also a critical application of GCN.
For example, a dynamic GCN was proposed with a tensor M-product technique, to cope with adjacency tensor and feature tensor yielded from dynamic graphs \cite{MalikUHKA21}.
Cui et al. proposed an adaptive graph encoder to strengthen the filtered features for more discriminative node embeddings, which was applied to link prediction tasks \cite{CuiZY020}.
Wang et al. designed a temporal GAE, which encoded the fundamentally asymmetric nature of a directed network from neighborhood aggregation and captured link weights via reshaping the adjacency matrix \cite{9889163}.
However, most of these GCN-based models suffer from shallow network structure owing to the over-smoothing issue.

\subsection{Over-smoothing Issue}
Numerous works have investigated approaches to alleviate the over-smoothing issue.
An improved normalization trick applying the ``diagonal enhancement" was introduced to help build a deep GCN \cite{ChiangLSLBH19}.
Simple graph convolution \cite{WuSZFYW19} was proposed to mine high-order embeddings in the graph via utilizing the $k$-th power of the graph convolutional matrix and removing the ReLU function.
A multi-layer GCN was constructed with AdaBoost to linearly combine embeddings from varying layers \cite{SunZL21}.
Cui et al. restricted over-smoothing by extracting hierarchical multi-scale node feature representations \cite{cui2021learning}.
PPNP and APPNP \cite{KlicperaBG19} were presented to replace the power of the graph convolutional matrix inspired by the personalized PageRank matrix.
Residual connections and dilated convolutions in CNN were applied to promote the training of a deep GCN model.
Jumping knowledge networks preserved the locality of node embeddings via dense skip connections that merged features from each layer \cite{XuLTSKJ18}.
A deep GCN was proposed with residual connection and identity mapping to relieve the over-smoothing problem \cite{ChenWHDL20}.
Most of these methods attempted to alleviate over-smoothing via connecting distinct network layers, simplifying multi-order graph convolutions, or conducting multi-layer feature fusion.
Nonetheless, these existing works did not simultaneously consider cross-layer feature connection and aggregation of embeddings from varying layers,
which benefits a multi-layer model to obtain a more precise prediction.

\begin{figure*}[!htbp]
    \centering
    \includegraphics[width=\textwidth]{./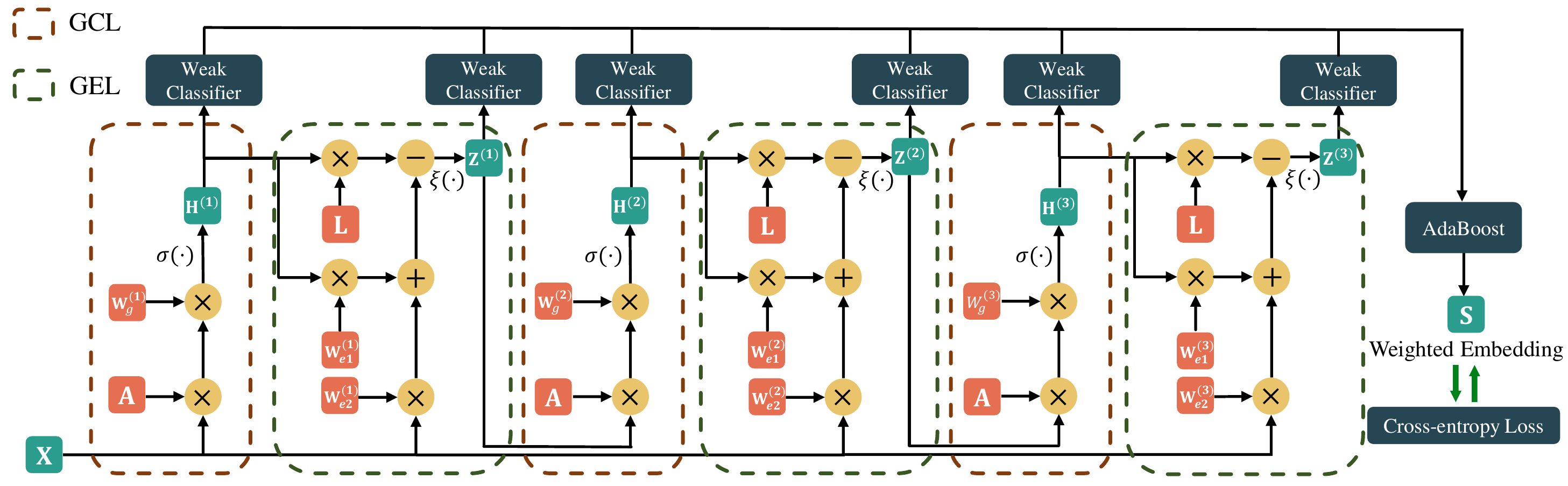}
    \caption{The framework of a 6-layer AGNN, which consists of three GCLs and three GELs. AGNN is a block-wise graph neural network framework constructed with alternating GCL and GEL, where each block contains a GCL and a GEL. For the purpose of exploiting reliable and discriminative multi-hop information, an improved AdaBoost strategy is utilized to aggregate node predictions yielded by weak classifiers in all layers, and the whole framework is evaluated by cross-entropy loss.}
    \label{Framework}     
\end{figure*}

\section{The Proposed Method}\label{AGNN}
Given a connected undirected graph $\mathcal{G} = (\mathcal{V}, \mathcal{E})$ with $n$ nodes and $e$ edges, we define the corresponding adjacency matrix as $\mathbf{A} \in \mathbb{R}^{n \times n}$.
The node features are denoted by the matrix $\mathbf{X} \in \mathbb{R}^{n \times m}$, i.e., $\mathbf{x}_{i}$ is an $m$-dimensional feature vector of the $i$-th node.
The proposed AGNN aims to carry out the semi-supervised classification task with the given set $\Omega$ of partially labeled samples and its corresponding ground truth matrix $\mathbf{Y} \in \mathbb{R}^{n \times c}$ encoding one-hot vectors, where $c$ is the number of classes.
For the purpose of better readability, we summarize the primarily used mathematical notations in Table \ref{Notations}.
As described in Figure \ref{Framework},
AGNN is a sequence of alternating GCL and GEL, and an improved AdaBoost strategy is adopted to merge multi-layer features.
Both GCL and GEL are constructed from graph-regularized optimization problems, which form a basic network block of AGNN.
In particular, GEL periodically projects the original node embeddings onto deep layers to alleviate over-smoothing, which introduces residual connections into AGNN.
In Section \ref{AGNN1}, we first analyze two distinct graph-regularized optimization problems, on the basis of which AGNN is constructed.
After that, an improved AdaBoost is designed to conduct multi-layer feature fusion in Section \ref{AGNN2}.
Finally, we summarized and analyzed the proposed model in Section \ref{AGNN3}, including time complexity analysis and comparison to related works.

\begin{table}[!htbp]
    \center
    \caption{A summary of primary notations in this paper.}
    \label{Notations}
    \begin{tabular}{l|l}
    \toprule
     Notations &       Explanations  \\
     \midrule
     $\mathbf{X}$             &  Feature matrix of nodes.  \\
     $\mathbf{A}$     &   Adjacency matrix.\\
     $\mathbf{Y}$     &   Label information.\\
     $\mathbf{H}^{(l)}$     &  Output of the $l$-th GCL.\\
     $\mathbf{Z}^{(l)}$     &  Output of the $l$-th GEL.\\
     $\mathbf{D}$     &   Diagonal degree matrix.\\
     $\mathbf{L}$     &  Laplacian matrix.\\
     $\mathbf{W}_{g}^{(l)}$     &  Weight matrix for the $l$-th GCL.\\
     $\mathbf{W}_{e1}^{(l)}$, $\mathbf{W}_{e2}^{(l)}$     &  Weight matrices for the $l$-th GEL.\\
     $\mathbf{Prox}_{g} (\cdot)$   & Proximal operator.\\
     $\xi_{(\mathbf{\theta}_{1}, \mathbf{\theta}_{2})}$   & MSReLU function with hyperparameters $\theta_{1}$ and $\theta_{2}$.\\
     $c (\cdot)$   & Weak classifier.\\
     $\mathbf{S}$  & Weighted embedding of multi-order feature fusion.\\
     $\alpha^{(l)}, \beta^{(l)}$ & Weights of classifiers for GCL and GEL.\\
     $\pi_{i}$  & Node weights for AdaBoost.\\
     $e_{\mathbf{H}}^{(l)}$, $e_{\mathbf{Z}}^{(l)}$  & Weighted classification error rates.\\
     $\eta_{i}$  & Node weight updating weight for AdaBoost.\\
     $R$  & Number of classes.\\
    \bottomrule
    \end{tabular}
\end{table}

\subsection{Alternating Graph Convolutional Layers and Graph Embedding Layers}
\label{AGNN1}
First, we revisit the definition of a vanilla graph convolution operator.
A GCL is formulated as
\begin{gather}\label{GCN}
    \begin{split}
    \mathbf{H}^{(l)} = \sigma \left( \tilde{\mathbf{D}}^{-\frac{1}{2}} \tilde{\mathbf{A}} \tilde{\mathbf{D}}^{-\frac{1}{2}} \mathbf{H}^{(l-1)} \mathbf{W}_{g}^{(l)} \right),
    \end{split}
\end{gather}
where $\tilde{\mathbf{A}} = \mathbf{A} + \mathbf{I}$ is the adjacency matrix that adds self-loop and $[\tilde{\mathbf{D}}]_{ii} = \sum_{j} [\tilde{\mathbf{A}}]_{ij}$ denotes the diagonal degree matrix.
The optional activation function is denoted as $\sigma(\cdot)$.
In fact, the added self-loop $\mathbf{A} + \mathbf{I}$ can be regarded as a simple residual connection to the previous layer.
Actually, GCL can be formulated as a graph-regularized optimization problem. Namely, we have the following theorem.
\begin{theorem}\label{theorem1}
    With a linear transformation matrix $\mathbf{W}_{g}^{(l)}$ and the node embedding $\mathbf{H}^{(l-1)}$ from the previous layer, the $l$-th GCL defined in Eq. \eqref{GCN} is the first-order approximation of the following optimization problem:
    \begin{equation}\label{OptimizationGCL}
        \mathbf{H}^{(l)} = \arg \min_{\mathbf{E}^{(l)}} \left \| \mathbf{E}^{(l)} - \mathbf{H}^{(l-1)} \mathbf{W}_{g}^{(l)} \right \|^{2}_{F} + \text{Tr} \left({\mathbf{E}^{(l)}}^{T} \tilde{\mathbf{L}} \mathbf{E}^{(l)} \right),
    \end{equation}
    where $\tilde{\mathbf{L}} = \mathbf{I} - \tilde{\mathbf{D}}^{-\frac{1}{2}} \tilde{\mathbf{A}} \tilde{\mathbf{D}}^{-\frac{1}{2}}$.
\end{theorem}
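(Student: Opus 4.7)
The plan is to show that one step of gradient descent on the objective in Eq.~\eqref{OptimizationGCL}, started from a natural initialization and with a carefully chosen step size, recovers exactly the linear part of the GCL in Eq.~\eqref{GCN}; applying the nonlinearity $\sigma(\cdot)$ then completes the argument.

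First, I would compute the gradient of the objective. Writing $f(\mathbf{E}) = \|\mathbf{E} - \mathbf{H}^{(l-1)}\mathbf{W}_g^{(l)}\|_F^2 + \mathrm{Tr}(\mathbf{E}^T\tilde{\mathbf{L}}\mathbf{E})$ and using $\tilde{\mathbf{L}}^T=\tilde{\mathbf{L}}$, straightforward matrix calculus gives
\begin{equation*}
\nabla_{\mathbf{E}} f(\mathbf{E}) \;=\; 2\bigl(\mathbf{E}-\mathbf{H}^{(l-1)}\mathbf{W}_g^{(l)}\bigr) \;+\; 2\tilde{\mathbf{L}}\mathbf{E}.
\end{equation*}
The exact minimizer is therefore $\mathbf{E}^{\star} = (\mathbf{I}+\tilde{\mathbf{L}})^{-1}\mathbf{H}^{(l-1)}\mathbf{W}_g^{(l)}$, which involves an expensive matrix inverse; the point of the ``first-order approximation'' language is to replace this closed form by one gradient step.

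Next I would perform one step of gradient descent with step size $\alpha$, initialized at $\mathbf{E}^{(0)} = \mathbf{H}^{(l-1)}\mathbf{W}_g^{(l)}$ (the data-fit term vanishes there, which makes the initialization natural). This yields
\begin{equation*}
\mathbf{E}^{(1)} \;=\; \mathbf{E}^{(0)} - \alpha\nabla f(\mathbf{E}^{(0)}) \;=\; \mathbf{H}^{(l-1)}\mathbf{W}_g^{(l)} - 2\alpha\tilde{\mathbf{L}}\,\mathbf{H}^{(l-1)}\mathbf{W}_g^{(l)}.
\end{equation*}
Substituting $\tilde{\mathbf{L}} = \mathbf{I}-\tilde{\mathbf{D}}^{-1/2}\tilde{\mathbf{A}}\tilde{\mathbf{D}}^{-1/2}$ and choosing $\alpha = 1/2$ collapses the identity term and leaves
\begin{equation*}
\mathbf{E}^{(1)} \;=\; \tilde{\mathbf{D}}^{-1/2}\tilde{\mathbf{A}}\tilde{\mathbf{D}}^{-1/2}\,\mathbf{H}^{(l-1)}\mathbf{W}_g^{(l)},
\end{equation*}
which is precisely the pre-activation of Eq.~\eqref{GCN}. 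Applying $\sigma(\cdot)$ then reproduces the GCL.

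The main obstacle is conceptual rather than computational: the theorem's phrase ``first-order approximation'' has to be pinned down, since a literal linearization of the objective would be trivial. I would argue that the appropriate reading is ``one step of first-order (gradient) optimization from the initialization $\mathbf{E}^{(0)}=\mathbf{H}^{(l-1)}\mathbf{W}_g^{(l)}$ with step size $\tfrac{1}{2}$,'' and justify the step-size choice by noting that the spectrum of $\tilde{\mathbf{L}}$ lies in $[0,2]$, so $\alpha=1/2$ keeps the update contractive. The nonlinearity $\sigma(\cdot)$ is not produced by the optimization itself; I would frame it as an additional projection/activation step appended after the gradient update, consistent with how unrolled-optimization networks are routinely derived, and remark that without $\sigma$ the statement is exact rather than approximate.
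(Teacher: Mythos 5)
Your proposal is correct, and it reaches exactly the paper's pre-activation expression $\left(\mathbf{I}-\tilde{\mathbf{L}}\right)\mathbf{H}^{(l-1)}\mathbf{W}_{g}^{(l)}=\tilde{\mathbf{D}}^{-\frac{1}{2}}\tilde{\mathbf{A}}\tilde{\mathbf{D}}^{-\frac{1}{2}}\mathbf{H}^{(l-1)}\mathbf{W}_{g}^{(l)}$, but the way you license the word ``first-order'' differs from the paper. You and the paper compute the same gradient and the same closed-form minimizer $\left(\mathbf{I}+\tilde{\mathbf{L}}\right)^{-1}\mathbf{H}^{(l-1)}\mathbf{W}_{g}^{(l)}$; the paper then expands $\left(\mathbf{I}+\tilde{\mathbf{L}}\right)^{-1}$ in a Taylor (Neumann) series and truncates it at first order, $\left(\mathbf{I}+\tilde{\mathbf{L}}\right)^{-1}\approx\mathbf{I}-\tilde{\mathbf{L}}$, whereas you take a single gradient-descent step with step size $\tfrac{1}{2}$ from the initialization $\mathbf{E}^{(0)}=\mathbf{H}^{(l-1)}\mathbf{W}_{g}^{(l)}$. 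Algebraically the two are the same operation (one gradient step with that step size is precisely multiplication by the first-order truncation of the inverse), so neither route is more or less rigorous; what your framing buys is a principled justification of the step size from the spectrum of $\tilde{\mathbf{L}}$ lying in $[0,2]$, and a pleasing symmetry with the way the paper itself derives GEL from Problem \eqref{GraphEmbedding} via a single proximal-gradient step, so under your reading both layers are one-step unrolled first-order updates. What the paper's framing buys is a literal match to the phrase ``first-order approximation'' (truncation of the series for the matrix inverse) without having to choose an initialization or step size. One caution on your closing remark: saying ``without $\sigma$ the statement is exact'' is only true of the correspondence between the one-step update and the linear part of Eq.~\eqref{GCN}; the update is still only an approximation of the argmin in Eq.~\eqref{OptimizationGCL}, since the exact minimizer retains the full inverse, so phrase that sentence carefully.
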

\begin{proof}
    The derivative w.r.t. $\mathbf{E}^{(l)}$ of the optimization problem defined in Eq. \eqref{OptimizationGCL} is
    \begin{gather}
        \frac{\partial \mathcal{J}}{\partial \mathbf{E}^{(l)}} = 2 \left(\mathbf{E}^{(l)} - \mathbf{H}^{(l-1)} \mathbf{W}_{g}^{(l)} \right) + 2 \tilde{\mathbf{L}} \mathbf{E}^{(l)}.
    \end{gather}
    Setting the derivative to 0, we have the closed-form solution
    \begin{gather}
        \mathbf{E}^{(l)} = \left(\mathbf{I} + \tilde{\mathbf{L}} \right)^{-1} \mathbf{H}^{(l-1)} \mathbf{W}_{g}^{(l)}.
    \end{gather}
    Because the term $\left(\mathbf{I} + \tilde{\mathbf{L}} \right)^{-1}$ can be decomposed into Taylor series, i.e.,
    \begin{gather}
        \left(\mathbf{I} + \tilde{\mathbf{L}} \right)^{-1} = \mathbf{I} - \tilde{\mathbf{L}} + \tilde{\mathbf{L}}^{2} + \ldots + (-1)^{t} \tilde{\mathbf{L}}^{t},
    \end{gather}
    we have the first-order truncated approximation as
    \begin{gather}
        \left(\mathbf{I} + \tilde{\mathbf{L}} \right)^{-1} \approx \mathbf{I} - \tilde{\mathbf{L}} = \tilde{\mathbf{D}}^{-\frac{1}{2}} \tilde{\mathbf{A}} \tilde{\mathbf{D}}^{-\frac{1}{2}}.
    \end{gather}
    Consequently, we obtain the approximation of $\mathbf{H}^{(l)}$ as 
    \begin{gather}
        \mathbf{H}^{(l)} \approx \tilde{\mathbf{D}}^{-\frac{1}{2}} \tilde{\mathbf{A}} \tilde{\mathbf{D}}^{-\frac{1}{2}}    \mathbf{H}^{(l-1)} \mathbf{W}_{g}^{(l)},
    \end{gather}
    which indicates that GCL is a first-order approximation of Problem \eqref{OptimizationGCL}.
\end{proof}

However, as we have analyzed before, deep graph convolutions often suffer from extremely indistinguishable features due to the over-smoothing phenomenon.
A solution is enabling the model to carry low-order information by connecting initial node features to each GCL.
Thus, we develop a new layer to bring features from the original space to deep layers.
To be consistent with GCL, we define a graph-regularized optimization problem to formulate this layer.
Instead of directly adding initial embeddings to the end of each GCL, a trainable projection derived from graph regularization optimization is applied, which adaptively learns low-dimensional representations from original node embeddings.
Namely, with graph embedding $\mathbf{H}$, we consider the following sparsity-constrained optimization
\begin{equation}\label{GraphEmbedding}
    \mathbf{Z}^{(l)} = \arg \min_{\mathbf{H}} \left \| \mathbf{X} - \mathbf{H} \mathbf{P}^{(l)} \right\|^{2}_{F}  
    +  \text{Tr} \left( {\mathbf{H}}^T \tilde{\mathbf{L}} \mathbf{H} \right)  + \left\| \mathbf{H} \right\|_{1},
\end{equation}
which explores Laplacian-constrained representations from the original feature space after the $l$-th GCL.
In pursuit of obtaining more distinguishable compressed node embeddings, we adopt $\left\| \mathbf{H} \right\|_{1}$ to consider sparse representations.
The sparsity constraint enables GEL to yield more discriminative node representations that only include important features, and alleviates the similar features of different nodes in deep layers, which is beneficial to solve the over-smoothing issue.
Consequently, it should have the same dimension as the previous GCL, and we can project it onto the original feature space with an over-complete dictionary matrix $\mathbf{P}^{(l)} \in \mathbb{R}^{d_{l} \times m}$, where $d_{l} < m$ is the number of hidden units at the $l$-th GCL.
In addition, we adopt the Laplacian embedding criterion $\text{Tr} \left( {\mathbf{H}}^T \tilde{\mathbf{L}} \mathbf{H} \right)$ to make nodes close when they are connected, where the Laplacian matrix $\tilde{\mathbf{L}}$ is precomputed.
In order to obtain more representative low-dimensional features,
$\left\| \mathbf{H} \right\|_{1}$ promoting the sparsity of outputs is added to extract robust projected embeddings during training.
Letting $f \left(\mathbf{H} \right) = \text{Tr} \left( {\mathbf{H}}^T \tilde{\mathbf{L}} \mathbf{H} \right) +  \left \| \mathbf{X} - \mathbf{H} \mathbf{P}^{(l)} \right\|^{2}_{F}$ and $g\left(\mathbf{H} \right) = \left\| \mathbf{H} \right\|_{1}$, we can derive the updating rules of Problem \eqref{GraphEmbedding} at $\mathbf{H}^{(l)}$ via proximal gradient descent method. Namely, 
\begin{gather}\label{PGD}
    \begin{split}
    \mathbf{Z}^{(l)} &= \arg \min_{\mathbf{H}} f(\mathbf{H}^{(l)}) + \langle\nabla f(\mathbf{H}^{(l)}), \mathbf{H}-\mathbf{H}^{(l)} \rangle \\
    &+ \frac{\tau }{2} \left\|\mathbf{H} - \mathbf{H}^{(l)} \right\|_{F}^{2} + \left\| \mathbf{H}^{(l)} \right\|_{1}\\
    &= \arg \min_{\mathbf{H}} \frac{\tau}{2}\left\|\mathbf{H} - \mathbf{Y} \right\|_{F}^{2} + \left\| \mathbf{H}^{(l)} \right\|_{1},
    \end{split}
\end{gather}
where $\mathbf{Y} = \mathbf{H}^{(l)} -  \frac{1}{\tau} \nabla f \left(\mathbf{H}^{(l)} \right)$, and $\tau$ is the Lipschitz constant.
Given the proximal operator $\mathbf{Prox}_{g} (\cdot)$,
Problem \eqref{PGD} can be solved by the proximal mapping w.r.t. $\ell_{1}$ norm.
Because we have the derivatives
\begin{gather}\label{DeltaF}
    \begin{split}
        \nabla f(\mathbf{H}^{(l)}) = 2 \tilde{\mathbf{L}} \mathbf{H}^{(l)} + 2 \left( \mathbf{H}^{(l)} \mathbf{P}^{(l)} - \mathbf{X}  \right) {\mathbf{P}^{(l)}}^{T},
    \end{split}
\end{gather}
the proximal mapping can be derived from
\begin{gather}\label{proximalMapping2}
    \begin{split}
        &\mathbf{Z}^{(l)} = \mathbf{Prox}_{g} \left( \mathbf{H}^{(l)} -  \frac{1}{\tau} \nabla f(\mathbf{H}^{(l)}) \right) \\
                         &= \mathbf{Prox}_{g} \left( \mathbf{H}^{(l)} -  \frac{1}{\tau} \left( 2 \tilde{\mathbf{L}} \mathbf{H}^{(l)} + 2 \left( \mathbf{H}^{(l)} \mathbf{P}^{(l)} - \mathbf{X}  \right) {\mathbf{P}^{(l)}}^{T} \right) \right) \\
                         &= \mathbf{Prox}_{g} \left(\mathbf{H}^{(l)} \left( \mathbf{I} - \frac{2}{\tau} \mathbf{P}^{(l)} {\mathbf{P}^{(l)}}^{T} \right) - \frac{2}{\tau} \tilde{\mathbf{L}} \mathbf{H}^{(l)} + \frac{2}{\tau} \mathbf{X} {\mathbf{P}^{(l)}}^{T} \right).
    \end{split}
\end{gather}
Transforming terms $\mathbf{I} - \frac{2}{\tau} \mathbf{P}^{(l)} {\mathbf{P}^{(l)}}^{T}$ and $\frac{2}{\tau} {\mathbf{P}^{(l)}}^{T}$ into trainable weight matrices $\mathbf{W}_{e1}^{(l)} \in \mathbb{R}^{d_{l} \times d_{l}}$ and $\mathbf{W}_{e2}^{(l)} \in \mathbb{R}^{m \times d_{l}}$ respectively, we have the following proximal projection
\begin{gather}\label{proximalMapping3}
    \begin{split}
        \mathbf{Z}^{(l)} = \mathbf{Prox}_{g} \left(\mathbf{H}^{(l)} \mathbf{W}_{e1}^{(l)} + \mathbf{X} \mathbf{W}_{e2}^{(l)} - \lambda \tilde{\mathbf{L}} \mathbf{H}^{(l)}  \right),
    \end{split}
\end{gather}
where $\lambda = \frac{2}{\tau}$ is a hyperparameter.
Because $\mathbf{Prox}_{g}(\cdot)$ can be regarded as an activation function,
Eq. \eqref{proximalMapping3} is similar to the definition of a neural network layer with two trainable weight matrices.
In particular, the proximal operator for $\ell_{1}$ constraint promoting the sparsity is
\begin{gather}
    \mathbf{Prox}_{g} \left(\mathbf{Z}^{(l)}_{ij} \right) = \text{sign} \left(\mathbf{Z}^{(l)}_{ij} \right) \left( \left|\mathbf{Z}^{(l)}_{ij} \right| - \theta \right)_{+},
\end{gather}
which is the Soft Thresholding (ST) function and $\theta$ is the hyperparameter to guarantee the sparsity of the output \cite{DBLP:conf/icml/GregorL10}.
It can be realized by a parameterized ReLU-based activation function, i.e.,
\begin{gather}
\xi_{\theta} (z) = \text{ReLU} \left(z - \theta \right) - \text{ReLU} \left(- z - \theta \right).
\end{gather}
Due to the definition of the ST function, $\xi_{\theta} (z)$ is actually smaller than $|z|$ when $z > \theta$ and $z < -\theta$.
This may be problematic due to the gap between original features and outputs of $\xi_{\theta} (z)$ when $\theta$ is relatively large.
For the sake of relieving the influence of this problem,
in this paper, we adopt a multi-stage proximal projection for the sparsity constraint, as shown below:
\begin{align}\label{ActivationFunction}
    \xi_{(\mathbf{\theta}_{1}, \mathbf{\theta}_{2})}(z) =
    \left\{\begin{matrix}
    z, & \theta_{2} \leq z, \\
    (\frac{2\theta_{2} - \theta_{1}}{\theta_{2}})(z - \theta_{1}),           & \theta_{1} \leq z < \theta_{2}, \\
    0 ,                       & -\theta_{1} \leq z < \theta_{1}, \\
    (\frac{2\theta_{2} - \theta_{1}}{\theta_{2}})(z + \theta_{1}),            & -\theta_{2} \leq z < -\theta_{1},\\
    z, & z < -\theta_{2},\\
    \end{matrix}\right.
\end{align}
where $\theta_{2} \geq \theta_{1} > 0$.
As a matter of fact, it also can be implemented by the combination of ReLU functions.
Consequently, we define a new ReLU-based activation function as
\begin{gather}\label{MSReLU}
    \begin{split}
    \xi_{(\mathbf{\theta}_{1}, \mathbf{\theta}_{2})}& \left(\mathbf{Z}^{(l)} \right)  \\
    = &w_1 \left( \text{ReLU} \left(\mathbf{Z}^{(l)} - \theta_{1} \right) - \text{ReLU} \left(-\mathbf{Z}^{(l)} - \theta_{1} \right) \right) \\
    - & w_2 \left( \text{ReLU} \left(\mathbf{Z}^{(l)} - \theta_{2} \right) - \text{ReLU} \left(-\mathbf{Z}^{(l)} - \theta_{2} \right) \right),
    \end{split}
\end{gather}
where $w_1$ and $w_2$ are computed according to the parameter settings of $\theta_{1}$ and $\theta_{2}$, that is, 
\begin{gather}
    \begin{split}
    &w_1 = \frac{2\theta_{2} - \theta_{1}}{\theta_{2}}, \\
    &w_2 = w_1 - 1 = \frac{\theta_{2} - \theta_{1}}{\theta_{2}}.
    \end{split}
\end{gather}
Consequently, we have $2 \geq w_1 \geq 1 \geq w_2 \geq 0$.
Eq. \eqref{MSReLU} is termed as a Multi-Stage ReLU (MSReLU) function.
The comparison of MSReLU and other activation functions is shown in Figure \ref{MSReLuCompare}.
It can be observed that with suitable $\theta_{1}$ and $\theta_{2}$, it has less gap between $\xi _{\theta_1, \theta_2} (z)$ obtained by MSReLU and $|z|$ due to the increasing slope when $\theta_{1} < z < \theta_{2}$ and $-\theta_{2} < z < -\theta_{1}$, which is beneficial to obtaining more accurate features.
When $z > \theta_{2}$ and $z < -\theta_{2}$, the slope is the same as ReLU and soft thresholding to maintain the feature distribution of outputs.

\begin{figure}[!tbp]
    \centering
    \includegraphics[width=0.48\textwidth]{./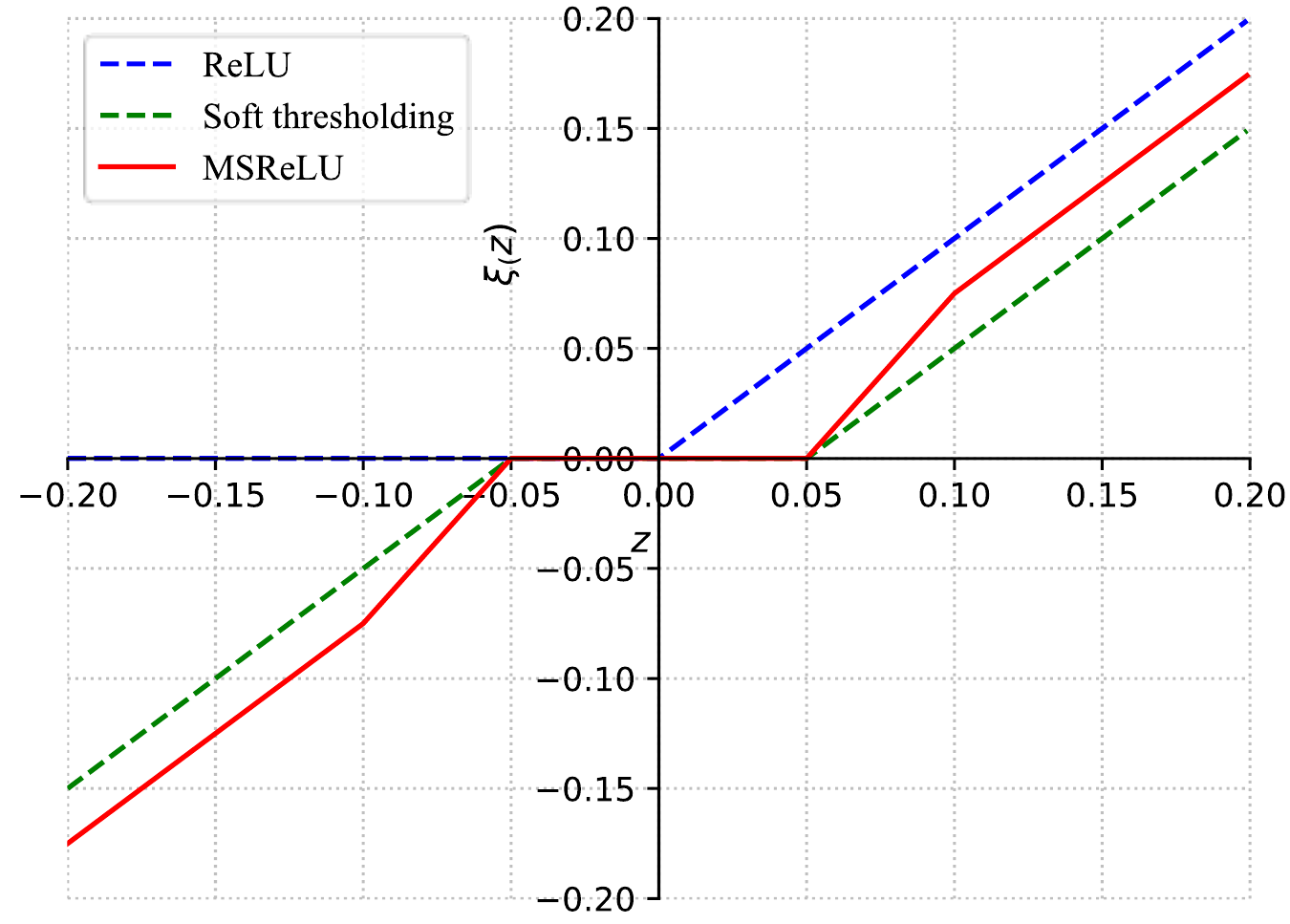}
    \caption{Comparison of different activation functions (ReLU, soft thresholding and MSReLU) for sparse proximal projection, where hyperparameters are fixed as $\theta = 0.05$ for soft thresholding, and $\theta_{1} = 0.05$, $\theta_{2} = 0.10$ for MSReLU.}
    \label{MSReLuCompare}
\end{figure}
Associated with GCL, we can formulate a basic block of the alternating forward computation (contains 2 layers) as
\begin{gather}
    \mathbf{H}^{(l)} = \sigma \left( \tilde{\mathbf{D}}^{-\frac{1}{2}} \tilde{\mathbf{A}} \tilde{\mathbf{D}}^{-\frac{1}{2}} \mathbf{H}^{(l-1)} \mathbf{W}_{g}^{(l)} \right), \label{GCL}\\
    \mathbf{Z}^{(l)} = \xi_{(\mathbf{\theta}_{1}, \mathbf{\theta}_{2})} \left(\mathbf{H}^{(l)} \mathbf{W}_{e1}^{(l)} + \mathbf{X} \mathbf{W}_{e2}^{(l)} - \lambda \tilde{\mathbf{L}} \mathbf{H}^{(l)}  \right), \label{GEL}
\end{gather}
where $\mathbf{H}^{(l-1)} = \mathbf{Z}^{(l-1)}$ for $l = 2, \ldots, t$ and $\mathbf{H}^{(0)} = \mathbf{X}$.
We term the forward computation defined in Eq. \eqref{GEL} as Graph Embedding Layer (GEL).
The definition of GEL shows that it refines graph representations from the previous GCL and considers one-hop embeddings of neighbors via $\tilde{\mathbf{L}}\mathbf{H}^{(l)}$.
Here we adopt $\mathbf{H}^{(l)}$ generated by GCL as the input of GEL, because GCL also implicitly optimizes the graph Laplacian regularization term.
Actually, both GCL and GEL are one-step approximations of Laplacian-based graph regularization problems.
GEL also leverages the information of original features via an input-injected computation defined by $\mathbf{X} \mathbf{W}_{e2}^{(l)}$ to preserve sparse and discriminative representations of nodes at the hidden and the last layers, thereby alleviating the over-smoothing problem.
On the basis of Eqs. \eqref{GCL} and \eqref{GEL}, we can construct a deep block-wise graph neural network with $2t$ layers that consists of GCL and GEL alternately.

\subsection{Alternating Graph-regularized Neural Network with Improved Adaboost}
\label{AGNN2}
In order to further leverage underlying features at each layer and obtain results contributed by different hops of neighborhood relationships, we adopt a variant of Adaboost to compute the final predictions of the model.
For the purpose of obtaining graph representations with the same dimension, we adopt a weak classifier
\begin{equation}
    c \left(\mathbf{H}^{(l)} \right) = \text{Softmax} \left( \sigma \left(\mathbf{H}^{(l)}\mathbf{W}_{c} + b \right) \right)
\end{equation}
for each layer of GCL, where $\mathbf{W}_{c} \in \mathbb{R}^{d_{l} \times d_{L}}$.
The weak classifier $c \left(\mathbf{Z}^{(l)} \right)$ for GEL is homologous.
We assign corresponding weights $\alpha^{(l)}$ and $\beta^{(l)}$ for each GCL and GEL.
Formulaically, the final weighted result of various classifiers is
\begin{gather}\label{WeightedResult}
    \mathbf{S} = \sum_{l=1}^{t} \left( \alpha^{(l)} c \left(\mathbf{H}^{(l)} \right) + \beta^{(l)} c \left(\mathbf{Z}^{(l)} \right) \right),
\end{gather}
where $\alpha^{(l)}$ indicates the weight of classifier w.r.t. $\mathbf{H}^{(l)}$ and $\beta^{(l)}$ indicates the weight of classifier w.r.t. $\mathbf{Z}^{(l)}$.
We measure the performance of each weak classifier on labeled nodes to calculate classifier weights, which ensures that classifiers with higher accuracy on the training set are assigned to larger weights.
First, the weighted error rates of two types of classifiers are computed by
\begin{gather}
    e_{\mathbf{H}}^{(l)} = \sum_{i \in \Omega} \pi_{i} \mathbb{I} \left( c \left(\mathbf{H}^{(l)} \right)_{i} \neq y_{i} \right) / \sum_{i \in \Omega} \pi_{i}, \label{WeightedErrGCL} \\
    e_{\mathbf{Z}}^{(l)} = \sum_{i \in \Omega} \pi_{i} \mathbb{I} \left( c \left(\mathbf{Z}^{(l)} \right)_{i} \neq y_{i} \right) / \sum_{i \in \Omega} \pi_{i}, \label{WeightedErrGEL}
\end{gather}
where $\Omega$ is the set of samples having supervision information and $\pi_{i}$ is the weight of a labeled node.
The sample weights are initialized by $\pi_{i} = \frac{1}{|\Omega|}$.
Therefore, classifier weights $\alpha^{(l)}$ and $\beta^{(l)}$ are computed by
\begin{gather}
    \alpha^{(l)} = \frac{1}{2} log \frac{1-e_{\mathbf{H}}^{(l)}}{e_{\mathbf{H}}^{(l)}} + log(R - 1), \label{ClassifierWeightGCL}\\
    \beta^{(l)} = \frac{1}{2} log \frac{1-e_{\mathbf{Z}}^{(l)}}{e_{\mathbf{Z}}^{(l)}} + log(R - 1), \label{ClassifierWeightGEL}
\end{gather}
where $R$ is the number of classes.
We apply the softmax normalization to all classifier weights, i.e.,
\begin{gather}\label{softmaxnormalize}
    [ \bm{\alpha}, \bm{\beta} ] \leftarrow \text{Softmax} ([ \bm{\alpha}, \bm{\beta} ]),
\end{gather}
where $\bm{\alpha} = [\alpha^{(1)}, \cdots, \alpha^{(l)}]$ and $\bm{\beta} = [\beta^{(1)}, \cdots, \beta^{(l)}]$.
For the purpose of increasing weights on incorrect classified nodes, we update $\pi_{i}$ by
\begin{gather}
        \pi_{i} \leftarrow (1 + \eta_{i}) \pi_{i}  \mathbb{I} \left( c_{i} \neq y_{i} \right),  \label{IncorrectUpdate} \\
        \pi_{i} \leftarrow \max (1 - \eta_{i}, \rho) \pi_{i} \mathbb{I} \left( c_{i} = y_{i} \right), \label{CorrectUpdate}
\end{gather} 
where $c_{i}$ is the predicting result of the former classifier and $y_{i}$ is the ground truth.
$\eta_{i}$ is an updating rate that changes the sample weight automatically according to predictions of the weak classifier.
The threshold $0 < \rho < 1$ is adopted to avoid nodes with weights of zeros.
In particular, the updating rate $\eta_{i}$ applied in this paper is defined by
\begin{gather}\label{SampleweightsUpdate}
    \eta_{i} = exp \left( log \left( \frac{p_{i, r}}{ \max \left(\sum_{j=1, j \neq r}^{R} p_{i,j}, \epsilon \right)} \right) \right),
\end{gather}
where $p_{i, r}$ is the probability of the $i$-th sample belonging to the $r$-th class and is obtained from the $r$-th entry of $\left[c \left( \mathbf{H}^{(l)} \right) \right]_{i}$ or $c \left[ \left( \mathbf{Z^{(l)}} \right)\right]_{i}$.
Namely, $p_{i, r} = \left[c\left( \mathbf{H}^{(l)} \right) \right]_{i, r}$ or $p_{i, r} = \left[c\left( \mathbf{Z}^{(l)} \right) \right]_{i, r}$.
Here $\epsilon$ is a tiny value avoiding the divide-by-zero error.
A higher $\eta_{i}$ indicates that the importance of the $i$-th sample should be larger if it is incorrectly classified, and should be smaller otherwise.
For a correctly predicted node, the weight of it would decrease remarkably if $p_{i, r}$ is higher.
This indicates that the model should pay less attention to correct predictions with high confidence.
As for a misclassified node, the weight of it would grow up considerably with higher $p_{i, r}$, attributed to the reason that the node prediction result is much against the ground truth.

With the weighted node embedding obtained by Eq. \eqref{WeightedResult}, the objective of the proposed AGNN is the cross-entropy loss function, i.e.,
\begin{gather}\label{GCNLoss}
    \mathcal{L} = - \sum_{i \in \Omega} \sum_{j=1}^{c} \mathbf{Y}_{ij} \mathrm{ln} \mathbf{S}_{ij},
\end{gather}
which only works on nodes in the training set $\Omega$ to perform the semi-supervised classification task.

\subsection{Model Analysis}
\label{AGNN3}

Algorithm \ref{AGCNalgo} depicts the procedure of AGNN.
In general, the procedure of AGNN is divided into two parts:
forward computation of multiple network layers and calculation on weighted graph embedding $\mathbf{S}$ via the variant of Adaboost. 
Given weight matrix $\mathbf{W}_{g}^{(l)} \in \mathbb{R}^{d_{l-1} \times d_{l}}$,
the computational complexity for the $l$-th GCL is linear to the number of edges $|\mathcal{E} |$.
Namely, it is $\mathcal{O} (|\mathcal{E} | d_{l-1}d_{l})$.
As to the $l$-th GEL, the computational complexity is $\mathcal{O} ( |\mathcal{E} |d_{l} + nmd_{l})$.
Consequently, the forward computation of a basic block with a GCL and a GEL is approximately $\mathcal{O} ( |\mathcal{E} | d_{l-1}d_{l} + nmd_{l})$.
Owing to $d_{l} \ll \min (n, m)$, GEL does not significantly increase the computational cost of the networks.

\begin{algorithm}[tbp]
    \renewcommand{\algorithmicrequire}{\textbf{Input:}}
    \renewcommand{\algorithmicensure}{\textbf{Output:}}
    \caption{Alternating Graph-regularized Neural Network}
    \begin{algorithmic}\label{AGCNalgo}
    \REQUIRE {Adjacency matrix $\mathbf{A} \in \mathbb{R}^{n \times n}$, feature matrix $\mathbf{X} \in \mathbb{R}^{n \times m}$, hyperparameters $\lambda$, $\rho$, $\theta_1$ and $\theta_{2}$;}
    \ENSURE Graph embedding $\mathbf{S} \in \mathbb{R}^{n \times c }$.
        \WHILE {not convergent}
            \FOR {$l = 1 \rightarrow t$}
                \STATE {Compute the output $\mathbf{H}^{(l)}$ of the $l$-th GCL via Eq. \eqref{GCL};}
                \STATE {Compute the output $\mathbf{Z}^{(l)}$ of the $l$-th GEL via Eq. \eqref{GEL};}
            \ENDFOR 
            \STATE {Initialize weights $\{ \pi_{i} \}_{i \in \Omega}$ by $\pi_{i} = \frac{1}{|\Omega|}$;}
            \FOR {$l = 1 \rightarrow t$}
                \STATE {Update sample weights $\{ \pi_{i} \}_{i \in \Omega}$ for the $l$-th GCL via Eqs. \eqref{IncorrectUpdate}, \eqref{CorrectUpdate} and \eqref{SampleweightsUpdate};}
                \STATE {Calculate the classifier weight $\alpha^{(l)}$ for the $l$-th GCL via Eqs. \eqref{WeightedErrGCL} and \eqref{ClassifierWeightGCL};}
                \STATE {Update sample weights $\{ \pi_{i} \}_{i \in \Omega}$ for the $l$-th GEL via Eqs. \eqref{IncorrectUpdate}, \eqref{CorrectUpdate} and \eqref{SampleweightsUpdate};}
                \STATE {Calculate the classifier weight $\beta^{(l)}$ for the $l$-th GEL via Eqs. \eqref{WeightedErrGEL} and \eqref{ClassifierWeightGEL};}
            \ENDFOR
            \STATE {Obtain weighted embeddings $\mathbf{S}$ via Eqs. \eqref{softmaxnormalize} and \eqref{WeightedResult};}
            \STATE {Update all trainable parameters via back propagation;}
        \ENDWHILE
    \RETURN {Weighted graph embedding $\mathbf{S}$.}
    \end{algorithmic}
    \end{algorithm}

In light of previous analysis, both GCL and GEL are approximations of optimization problems w.r.t. graph regularization, attributed to which they can be considered as two distinct layers.
Hence, AGNN can be approximately regarded as an alternating optimization procedure of Problems \eqref{OptimizationGCL} and \eqref{GraphEmbedding}.
The difference between the two layers is that the former optimization performs graph convolutions, and the latter optimization is a sparse graph-regularized projection from the original feature space.
In a nutshell, the proposed AGNN is a block-wise graph neural network that simultaneously considers cross-layer connection and aggregation of multi-hop information, which is beneficial to obtaining reliable high-order neighborhood embeddings before conducting information fusion.
The primary differences to existing models are summarized as follows:
\begin{enumerate}
    \item Different from methods that directly combine node embeddings from outputs of varied layers (e.g., AdaGCN \cite{SunZL21}), AGNN gets rid of inaccurate predictions of deep layers via periodic projection from original feature space to latent embeddings.
    \item Instead of widely used additive connections from previous layers, AGNN establishes an optimization-inspired GEL module which is derived from the Laplacian-based graph regularization problem. This makes the initial features propagate to each GCL dexterously with less information loss.
\end{enumerate}

\section{Experimental Analysis}\label{Experiments}
In this subsection, comprehensive experiments are conducted including evaluation against several state-of-the-art models and ablation studies.
All experiments are run on a platform with AMD R9-5900X CPU, NVIDIA GeForce RTX 3060 12G GPU and 32G RAM.

\subsection{Experimental Setup}
For the following experiments, we compare the proposed AGNN with numerous methods.
Apart from classical baselines (MLP and Chebyshev \cite{DefferrardBV16}), other state-of-the-art methods can be divided into two categories: vanilla GNN-based models (GraphSAGE \cite{HamiltonYL17}, GAT \cite{VelickovicCCRLB18} and ScatteringGCN \cite{MinWW20}), and multi-layer or high-order-information-based GCN methods (GCN \cite{KipfW17}, APPNP \cite{KlicperaBG19}, JK-Net \cite{XuLTSKJ18}, SGC \cite{WuSZFYW19}, ClusterGCN \cite{ChiangLSLBH19}, GCNII \cite{ChenWHDL20}, SSGC \cite{ZhuK21} and AdaGCN \cite{SunZL21}).
In particular, APPNP, SGC and SSGC propagate node information via the proposed high-order filters, where numbers of order can be regarded as numbers of layers for other multi-layer approaches.
The compared models are demonstrated in detail as follows.

\begin{table*}[!htbp]
    \centering
    \begin{tabular}{l|cccccccc}
    \toprule
     Datasets       & \# Nodes & \# Edges  & \# Features   & \# Classes & \# Train  &   \# Valid  & \# Test & Data types\\
     \midrule
     Citeseer       & 3,327    & 4,732   & 3,703    & 6   & 120   & 500 & 1,000 & Citation network \\
     CoraFull       & 19,793   & 63,421  & 8,710    & 70  & 1,400  & 500 & 1,000 & Citation network      \\
     Chameleon      & 2,277    & 18,050  & 2,325    & 5   & 100   & 500 &  1,000 & Link network \\
     BlogCatalog    & 5,196    & 171,743 & 8,189    & 6   & 120    & 500 &  1,000 & Social network       \\
     ACM            & 3,025    & 13,128  & 1,870    & 3   & 60     & 500 &  1,000 & Paper network       \\ 
     Flickr         & 7,575    & 239,738 & 12,047   & 9   & 180   & 500 & 1,000 & Social network     \\
     UAI            & 3,067    & 28,311  & 4,973    & 19  & 380  & 500 &  1,000 & Citation network      \\
     Actor           & 7,600   & 15,009   & 932     & 5  & 100 & 500 &  1,000 & Social network     \\
    \bottomrule
    \end{tabular}
    \caption{A brief statistics of all graph datasets and data split modes.}
    \label{DataDescription}
\end{table*}

\begin{enumerate}
    \item \textbf{MLP} is a classical baseline for classification, which is a multi-layer perceptron architecture with a softmax function as the classifier.
    \item \textbf{Chebyshev} is a GCN-like baseline that adopts Chebyshev filters to perform graph convolutions with the given node features and the topology network.
    \item \textbf{GCN} conducts a variant of convolution on the graph, which is exactly the first-order approximation of the Chebyshev polynomial.
    \item \textbf{GraphSAGE} constructs a graph neural network that explores node embeddings through sampling and accumulating features from local neighbors of a node.
    \item \textbf{GAT} is a graph neural network adopting an attention mechanism to explore node attributes across the graph, which enables the implicit assignment of weights to distinct nodes in a neighborhood.
    \item \textbf{JK-Net} dexterously exploits various neighborhood ranges of nodes via a jumping knowledge structure that considers residual connections.
    \item \textbf{SGC} proposes a faster variant of GCN via successively removing nonlinearities and collapsing weight matrices between consecutive layers.
    \item \textbf{APPNP} leverages personalized PageRank to improve the performance of GCN-like models, which derives an improved propagation scheme.
    \item \textbf{ClusterGCN} is a GCN-based framework that samples a group of nodes by a graph clustering algorithm, which alleviates the over-smoothing problem via a diagonal enhancement architecture.
    \item \textbf{GCNII} is a variant of GCN with residual connection and identity mapping, which effectively alleviates the over-smoothing phenomenon.
    \item \textbf{ScatteringGCN} builds an augmented GCN with geometric scattering transforms and residual convolutions to alleviate the over-smoothing issue.
    \item \textbf{SSGC} develops a variant of GCN by adopting a modified Markov diffusion kernel, which explores the global and local contexts of nodes.
    \item \textbf{AdaGCN} integrates learned knowledge from distinct layers of GCN in an Adaboost way, which updates layer weights iteratively.
\end{enumerate}

\begin{table*}[!tbp]
    \centering
    \begin{tabular}{l|llllllll|l}
        \toprule
        Methods / Datasets & Citeseer & CoraFull & Chameleon & BlogCatalog & ACM    & Flickr & UAI   & Actor  & Avg Ranks \\ \midrule
        MLP               &  0.366  & 0.051    & 0.286     & 0.646       & 0.812  & 0.431  & 0.188  & 0.224 &  13.3 \\
        Chebyshev \cite{DefferrardBV16} &  0.693  & 0.534    & 0.217     & 0.357       & 0.829  & 0.304  & 0.215  & 0.182 & 13.4  \\\midrule
        GraphSAGE \cite{HamiltonYL17}  &  0.620   & 0.521    & 0.437     & 0.525       &  0.886  & 0.286  & 0.483  & 0.191 & 13.0  \\
        GAT \cite{VelickovicCCRLB18}     &  0.683  & 0.571    & 0.460     & 0.681       & 0.889  & 0.429  & 0.597  & 0.246 &  7.38 \\
        ScatteringGCN \cite{MinWW20}   & 0.679    & 0.519    & 0.410     & 0.690       & 0.890  & 0.419  & 0.364  & 0.214 & 11.4  \\
        GCN \cite{KipfW17}    & 0.697 (2)  & 0.567 (2)   & 0.447 (2)    & 0.711  (2)      & 0.875 (2) & 0.414 (2)  & 0.498 (2)  & 0.240 (4)  & 9.25 \\
        JK-Net \cite{XuLTSKJ18} & 0.703 (4)    & 0.568 (2)    & 0.475 (20)    & 0.747 (16)    & 0.892 (8)  & 0.547 (2)  & 0.494 (18)  & 0.224 (18) & 6.38  \\
        APPNP \cite{KlicperaBG19} & 0.698 (4)  & 0.576 (4)  & 0.404 (2) & 0.813 (8) & 0.885 (4)  & 0.521 (2)  & 0.560 (2)  & 0.212 (4)  &  7.75\\
        SGC \cite{WuSZFYW19} & 0.697 (10)  & 0.583 (2)  & 0.445 (2)   & 0.716 (2)     & 0.887 (2) & 0.410 (2)  & 0.571 (2)  & 0.247 (4) & 7.50  \\
        ClusterGCN \cite{ChiangLSLBH19} & 0.681 (2) & 0.576 (2) &  0.449 (2) & 0.731 (2)  & 0.893 (2) & 0.483 (2)  & 0.525 (2)  & 0.239 (8) & 6.75 \\
        GCNII \cite{ChenWHDL20}   & \textcolor{red}{\textbf{0.710 (12)}} &  0.576 (18) &   0.449 (16) &   0.845 (12)  &  0.901 (12) &  0.545 (12)  &  0.619 (14)  &  0.238 (10) &  3.50 \\
        SSGC \cite{ZhuK21}    & 0.702 (20)  & 0.575 (4) & 0.446 (2)    & 0.760 (2)     & 0.889 (2)  & 0.478 (2)  & 0.523 (10)  & 0.248 (2) &  6.50 \\
        AdaGCN \cite{SunZL21}& 0.663 (2)  & 0.587 (10) & 0.479 (4)    & 0.800 (2)      & 0.894 (2) & 0.552 (2)  & 0.588 (2)  & 0.230 (2) & 4.88  \\\midrule
        AGNN w/o Adaboost & 0.689 (2)   & 0.564 (2) & 0.440 (2) & 0.766 (10)    & 0.888 (2) & 0.503 (20)  & 0.574 (10) & 0.254 (16) &  7.13 \\
        AGNN              & 0.707 (6) & \textcolor{red}{\textbf{0.589 (6)}}  & \textcolor{red}{\textbf{0.503 (14)}}   & \textcolor{red}{\textbf{0.849 (10)}}   &  \textcolor{red}{\textbf{0.903 (8)}} & \textcolor{red}{\textbf{0.584 (4)}} & \textcolor{red}{\textbf{0.647 (6)}} & \textcolor{red}{\textbf{0.256 (6)}} & \textcolor{red}{\textbf{1.13}}  \\ 
        \bottomrule
    \end{tabular}
    \caption{Performance (accuracy) comparison with 20 labeled samples per class as supervision signals, where the highest accuracy is highlighted in red. The last column shows the average ranks of the performance of different methods. For multi-layer or multi-order information-based models, the optimal layer numbers or orders are recorded in brackets.}
    \label{Performance}
\end{table*}
In this paper, eight different graph-structural datasets are adopted to evaluate the performance of numerous methods, as listed below:
\begin{enumerate}
    \item \textbf{Citeseer}\footnote{https://linqs.soe.ucsc.edu/data} is a benchmark dataset for literature citation networks, where nodes represent papers and edges represent citations between them.
    \item \textbf{CoraFull}\footnote{https://github.com/shchur/gnn-benchmark\#datasets} is the larger version of Cora dataset, which is another well-known citation network. Herein, each node denotes paper and edge stands for citation. All nodes are classified according to their topics.
    \item \textbf{Chameleon}\footnote{https://github.com/benedekrozemberczki/MUSAE/} contains node relationships of a large number of articles on a topic of the English Wikipedia website, where edges represent the mutual links among articles.
    \item \textbf{BlogCatalog}\footnote{https://networkrepository.com/soc-BlogCatalog.php} includes a large number of bloggers and their social relationships from the website. Node features are extracted from the keywords of user information and all bloggers are divided into 6 distinct types.
    \item \textbf{ACM}\footnote{https://github.com/Jhy1993/HAN} is a paper network where each node denotes a paper. Different from citation networks, edges connect papers that share the same authors.
    \item \textbf{Flickr}\footnote{https://github.com/xhuang31/LANE} is a social network that records relationships among users from an image and video hosting website. All users are grouped into 9 categories on the basis of their personal interests.
    \item \textbf{UAI}\footnote{https://github.com/zhumeiqiBUPT/AM-GCN} is a dataset for the test of GCN on community detection, which is a webpage citation network. Nodes representing webpages are collected from multiple universities and each edge denotes the citation.
    \item \textbf{Actor}\footnote{https://github.com/CUAI/Non-Homophily-Large-Scale} is a subgraph of the film-director-actor-writer network, which only includes the connections of various actors. Each edge represents the co-occurrence of two actors on the same Wikipedia page.
\end{enumerate}

A statistical summary of these datasets is demonstrated in Table \ref{DataDescription}.
For fair comparison and avoiding undesired influence raised by data distribution, we shuffle all datasets and randomly select 20 labeled samples per class for training, 500 samples for validation and 1,000 samples for testing.

In order to provide a fair test bed for all compared methods, we list some hyperparameters in experiments.
Learning rates of these methods are fixed as $0.01$ or $0.005$, which are preferred to be smaller when more network layers are utilized.
For all GNN-based methods, we fix the number of hidden units at each layer as 128 or 16.
Other method-specific hyperparameters are fixed as their settings in original papers.

As for the proposed AGNN, we also apply the same hidden layers as compared methods.
The learning rates are also selected from 0.01 and 0.005.
In general, a deeper AGNN requires a smaller learning rate, and we adopt learning rate adaptation via decreasing it when there is no loss drop for a period of training epochs.
The Adam optimizer is adopted and the weight decay is fixed as $5 \times 10^{-4}$.
The activation function $\sigma(\cdot)$ is $\text{tanh}(\cdot)$ for weak classifiers while $\text{ReLU}(\cdot)$ for GCL.
As for the thresholds in the MSReLU function of GEL, we fix them as $\theta_1 = 0.02$ and $\theta_2 = 0.04$.
For the Adaboost strategy, the tiny value in Eq. \eqref{SampleweightsUpdate} is fixed as $10^{-4}$.

\subsection{Experimental Results}

\subsubsection{Performance comparison}
First of all, we compare the performance of the proposed AGNN with all selected approaches.
Table \ref{Performance} exhibits the semi-supervised classification accuracy on eight datasets.
In pursuit of conducting the ablation study and validating the effectiveness of the designed network structure,
we further examine the performance of AGNN without the Adaboost framework (dubbed AGNN w/o AdaBoost), which does not aggregate embeddings of all network layers but directly outputs predictions of the final GEL.
Because multi-layer or multi-order information-based models aim to improve GCN via mining information from deep layers, we record the highest accuracy of these models and the corresponding numbers of layers.
The optimal numbers of layers or orders of neighbors are shown in brackets.

In order to validate the statistical significance of the experimental results, we follow \cite{ShangYLZ22} and adopt Friedman test.
The average ranks of all compared models are recorded in the last column of Table \ref{Performance}, on the basis of which we obtain the Friedman testing score $F_{F} = 10.57$.
With $15$ compared models and $8$ test datasets, the critical value is $1.794$ for $\alpha = 0.05$,
which indicates that $F_{F}$ is higher than the critical value.
Thus, we can reject the null hypothesis, which points out that the performance of all compared methods is significantly different with a confidence level at $95\%$.

\begin{table*}[!tbp]
    \centering
    \begin{tabular}{c|l|llllllllll}
        \toprule 
        Datasets & Models & 2-layer     & 4-layer     & 6-layer & 8-layer & 10-layer & 12-layer & 14-layer & 16-layer & 18-layer & 20-layer \\ \midrule
        \multirow{9}{*}{CoraFull}
        & GCN \cite{KipfW17}              & \textbf{0.567}* & 0.495 & 0.451 & 0.443 & 0.408 & 0.376 & 0.332 & 0.204 & 0.119 & 0.019 \\ 
        & JK-Net \cite{XuLTSKJ18} & \textbf{0.568}*             & 0.534 & 0.531 & 0.493 & 0.553 & 0.506 & 0.456 & 0.523 & 0.527 & 0.530\\ 
        & APPNP \cite{KlicperaBG19}            & 0.569 & \textcolor{red}{\textbf{0.576}*} & 0.560 & 0.561 & 0.550 & 0.556 & 0.552 & 0.549 & 0.547 & 0.544 \\ 
        & SGC \cite{WuSZFYW19}            & \textcolor{red}{\textbf{0.583}*} & \textcolor{red}{\textbf{0.576}} & 0.562 & 0.551 & 0.534 & 0.512 & 0.495 & 0.474 & 0.441 & 0.418 \\ 
        & ClusterGCN \cite{ChiangLSLBH19} & \textbf{0.576}* & 0.518 & 0.494 & 0.475 & 0.442 & 0.391 & 0.337 & 0.264 & 0.257 & 0.214 \\ 
        & GCNII \cite{ChenWHDL20} & 0.539 & 0.536 & 0.558 & 0.565 & 0.568 & 0.571 & 0.568 & 0.574 & \textbf{0.576}* & 0.565 \\ 
        & SSGC  \cite{ZhuK21}             & 0.572 & \textbf{0.575}* & 0.572 & 0.561 & 0.562 & 0.561 & 0.541 & 0.564 & 0.562 & 0.537 \\ 
        & AdaGCN \cite{SunZL21} & 0.552             & 0.553 & 0.571 & 0.573 & \textcolor{red}{\textbf{0.587}*} & \textcolor{red}{\textbf{0.579}} & \textcolor{red}{\textbf{0.586}} & 0.575 & 0.564 & 0.535  \\ 
        & AGNN w/o Adaboost    & \textbf{0.564}*     & 0.544 & 0.532 & 0.554 & 0.544 & 0.554 & 0.523 & 0.536 & 0.545 & 0.541   \\ 
        & AGNN   & 0.570 & 0.574 & \textcolor{red}{\textbf{0.589}*} & \textcolor{red}{\textbf{0.583}} & 0.580 & 0.568 & 0.565 & \textcolor{red}{\textbf{0.577}} & \textcolor{red}{\textbf{0.584}} & \textcolor{red}{\textbf{0.574}} \\ \toprule
        \multirow{9}{*}{BlogCatalog}
        & GCN \cite{KipfW17}              & \textbf{0.697}*   & 0.548 & 0.231 & 0.125 & 0.142 & 0.154 & 0.187 & 0.164 & 0.159 & 0.171  \\ 
        & JK-Net \cite{XuLTSKJ18}         & 0.725             & 0.711 & 0.693 & 0.711 & 0.670 & 0.724 & 0.696 & \textbf{0.747}* & 0.668 & 0.698 \\ 
        & APPNP \cite{KlicperaBG19}   & 0.791 & 0.810 & 0.811 & \textcolor{red}{\textbf{0.813}*} & 0.809 & 0.806 & 0.809 & 0.811 & 0.805 & 0.804 \\ 
        & SGC \cite{WuSZFYW19}            & \textbf{0.716}*           & 0.616 & 0.490 & 0.394 & 0.313 & 0.238 & 0.232 & 0.225 & 0.220 & 0.237 \\ 
        & ClusterGCN \cite{ChiangLSLBH19} & \textbf{0.731}*             & 0.542 & 0.395 & 0.256 & 0.171 & 0.192 & 0.182 & 0.176 & 0.172 & 0.171 \\ 
        & GCNII \cite{ChenWHDL20} & \textcolor{red}{\textbf{0.816}} & 0.813 & 0.799 & 0.802 & 0.843 & \textcolor{red}{\textbf{0.845}*} & 0.810 & \textcolor{red}{\textbf{0.838}} & 0.801 & 0.796 \\ 
        & SSGC  \cite{ZhuK21}             & \textbf{0.760}*  & 0.744 & 0.744 & 0.736 & 0.683 & 0.728 & 0.726 & 0.723 & 0.722 & 0.661 \\ 
        & AdaGCN \cite{SunZL21}        & \textbf{0.800}*   & 0.723 & 0.678 & 0.682 & 0.681 & 0.684 & 0.678 & 0.688 & 0.684 & 0.688   \\ 
        & AGNN w/o Adaboost   & 0.762             & 0.745 & 0.746 & 0.741 & \textbf{0.766}* & 0.736 & 0.737 & 0.751 & 0.754 & 0.748 \\
        & AGNN                             & 0.720             & \textcolor{red}{\textbf{0.824}} & \textcolor{red}{\textbf{0.824}} & 0.805 & \textcolor{red}{\textbf{0.849}*} & \textbf{0.815} & \textcolor{red}{\textbf{0.820}} & 0.814 & \textcolor{red}{\textbf{0.808}} & \textcolor{red}{\textbf{0.814}} \\ \toprule
        \multirow{9}{*}{Flickr}
        & GCN \cite{KipfW17}              & \textbf{0.414}* & 0.127 & 0.161 & 0.091 & 0.100 & 0.092 & 0.089 & 0.091 & 0.094 & 0.095 \\ 
        & JK-Net \cite{XuLTSKJ18}    & \textbf{0.547}*             & 0.421 & 0.392 & 0.418 & 0.422 & 0.409 & 0.439 & 0.445 & 0.343 & 0.345\\ 
        & APPNP \cite{KlicperaBG19}   & \textbf{0.521}* & 0.502 & 0.461 & 0.485 & 0.465 & 0.475 & 0.468 & 0.474 & 0.487 & 0.464 \\ 
        & SGC \cite{WuSZFYW19}            & \textbf{0.410}* & 0.337 & 0.220 & 0.197 & 0.154 & 0.179 & 0.167 & 0.160 & 0.155 & 0.154 \\ 
        & ClusterGCN \cite{ChiangLSLBH19} & \textbf{0.483}* & 0.398 & 0.314 & 0.322 & 0.217 & 0.198 & 0.184 & 0.143 & 0.112 & 0.103 \\ 
        & GCNII \cite{ChenWHDL20} & 0.489 & 0.499 & 0.514 & 0.511 & 0.530 & \textcolor{red}{\textbf{0.545}*} & 0.523 & 0.538 & 0.524 & 0.524 \\ 
        & SSGC  \cite{ZhuK21}             & \textbf{0.478}* & 0.433 & 0.388 & 0.356 & 0.340 & 0.328 & 0.320 & 0.315 & 0.309 & 0.304 \\ 
        & AdaGCN \cite{SunZL21}   & \textbf{0.552}*             & 0.546 & \textcolor{red}{\textbf{0.539}} & \textcolor{red}{\textbf{0.539}} & 0.542 & \textcolor{red}{\textbf{0.545}} & \textcolor{red}{\textbf{0.545}} & \textcolor{red}{\textbf{0.545}} & 0.544 & 0.546 \\ 
        & AGNN w/o Adaboost   & 0.481             & 0.494 & 0.488 & 0.490 & 0.495 & 0.494 & 0.499 & 0.495 & 0.493 & \textbf{0.503}* \\ 
        & AGNN       &      \textbf{\textcolor{red}{0.560}} & \textcolor{red}{\textbf{0.584}*} & 0.529 & 0.521 & \textcolor{red}{\textbf{0.543}} & 0.511 & 0.522 & 0.535 & \textcolor{red}{\textbf{0.545}} & \textcolor{red}{\textbf{0.557}} \\ \toprule
        \multirow{9}{*}{UAI}
        & GCN \cite{KipfW17}              & \textbf{0.498}* & 0.301 & 0.195 & 0.202 & 0.175 & 0.186 & 0.192 & 0.123 & 0.109 & 0.080 \\ 
        & JK-Net \cite{XuLTSKJ18}  & 0.474             & 0.467 & 0.466 & 0.492 & 0.467 & 0.485 & 0.490 & 0.484 & \textbf{0.494}* & 0.476\\ 
        & APPNP \cite{KlicperaBG19}  & \textbf{0.560}* & 0.507 & 0.484 & 0.531 & 0.520 & 0.510 & 0.493 & 0.540 & 0.526 & 0.486 \\ 
        & SGC \cite{WuSZFYW19}            & \textbf{0.571}* & 0.481 & 0.258 & 0.141 & 0.136 & 0.157 & 0.126 & 0.080 & 0.072 & 0.036 \\ 
        & ClusterGCN \cite{ChiangLSLBH19} & \textbf{0.522}* & 0.516 & 0.452 & 0.358 & 0.244 & 0.265 & 0.224 & 0.198 & 0.167 & 0.165 \\
        & GCNII \cite{ChenWHDL20} & \textcolor{red}{\textbf{0.598}} & 0.602 & 0.611 & 0.608 & 0.617 & \textbf{0.622}* & 0.619 & 0.618 & 0.619 & 0.615 \\ 
        & SSGC  \cite{ZhuK21}             & 0.508 & 0.500 & 0.509 & 0.522 & \textbf{0.523}* & 0.518 & 0.519 & 0.523 & 0.519 & 0.521 \\ 
        & AdaGCN \cite{SunZL21}   & \textbf{0.588}*            & 0.583 & 0.581 & 0.582 & 0.582 & 0.580 & 0.579 & 0.576 & 0.581 & 0.582 \\ 
        & AGNN w/o Adaboost    & 0.567             & 0.561 & 0.551 & 0.565 & \textbf{0.574}* & 0.562 & 0.560 & 0.562 & 0.545 & 0.563   \\ 
        & AGNN                            & 0.572 & \textcolor{red}{\textbf{0.630}} & \textcolor{red}{\textbf{0.647}*} & \textcolor{red}{\textbf{0.640}} & \textcolor{red}{\textbf{0.641}} & \textcolor{red}{\textbf{0.638}} & \textcolor{red}{\textbf{0.623}} & \textcolor{red}{\textbf{0.623}} & \textcolor{red}{\textbf{0.622}} & \textcolor{red}{\textbf{0.621}} \\ \bottomrule
    \end{tabular}
    \caption{Accuracy comparison (GCN, SGC, ClusterGCN, SSGC, AdaGCN and AGNN) with various numbers of layers on Chameleon, CoraFull, Flickr and UAI datasets. The optimal numbers of layers for each method are highlighted with ``*", and the best performance of different methods with the same number of layers is highlighted in red.}
    \label{Deeperlayer}
\end{table*}

From experimental results, we draw the following conclusions.
First, the experimental results indicate that the proposed AGNN attains encouraging performance and outperforms the other methods by a considerable margin on most datasets.
Second, it can be observed that AGNN obtains the optimal classification accuracy with more layers.
In most cases, AGNN achieves the best performance with more than 6 layers.
Although other compared methods sometimes also achieve better performance with more layers, 2 or 4 layers are still the best choice for most datasets.
Last but not the least, AGNN w/o Adaboost obtains competitive classification results and sometimes gets higher accuracy with over 10 layers (Flickr, UAI and Actor datasets).
This phenomenon indicates that AGNN w/o Adaboost guarantees the discrimination of node embeddings and the reliability of deep layers.
From the ablation study, we find that AGNN performs satisfactorily compared with AGNN w/o Adaboost, which indicates the effectiveness of the proposed improved Adaboost.
In addition, the experimental results point out that the optimal layer numbers of AGNN are not always higher than that of AGNN w/o Adaboost.
This may be owing to the fact that the aggregation process enables the model to obtain competitive accuracy with fewer layers.
Besides, deep-layer models do not always mine more information on some datasets, which depends on the topological structure of datasets.
However, it is significant that AGNN obtains higher accuracy with more layers on some datasets, especially on Chameleon and ACM.
In a nutshell, these observations reveal that the performance leading of AGNN is significant with larger numbers of layers.

\subsubsection{Performance with deep layers}
Because the proposed AGNN aims to tackle the over-smoothing issue and extract more distinctive characteristics with deep layers, we further conduct comparing experiments on some multi-layer or multi-order information-based GCN methods to explore accuracy trends with varying numbers of layers.

Table \ref{Deeperlayer} demonstrates the classification accuracy of selected methods with 20 labeled nodes for each class, from which we have the following observation.
As most existing works have analyzed, GCN encounters a dramatic accuracy plunge with over 2 graph convolutional layers on all tested datasets. 
In contrast, the performance decline of other compared methods is not as severe as GCN, and some of them even gain marginal performance improvement as the number of layers rises.
Nevertheless, several compared approaches still attain the highest accuracy with a 2-layer architecture, and sometimes performance may dwindle as the numbers of layers are larger.
In general, AdaGCN which also integrates multi-hop node embeddings behaves favorably on most datasets.
Nonetheless, as we have discussed, it still suffers from indistinguishable node features from deep layers on some datasets (e.g., BlogCatalog), owing to which deep AdaGCN leads to unsatisfactory performance.
We can discover that AGNN achieves competitive performance with fewer layers, and often outperforms other models with more stacked layers.
Above all, the proposed model maintains accuracy at a high level with more layers, and a suitable multi-layer AGNN is helpful to exploring representative high-order node features.
As for AGNN w/o Adaboost, although it does not always outperform other models, it succeeds in lessening the negative influence of over-smoothing compared with other models and performs satisfactorily on all tested datasets.
We also visualize the performance trends of compared methods in Figure \ref{Deeplayers} with more layers (32 and 64 layers), which intuitively shows the ability of compared models to overcome over-smoothing.
AGNN generally performs the best with deeper layers.
We find that AGNN also gains marginal improvement or keeps stable with 32/64 layers, which indicates that it gets rid of over-smoothing.
Generally, AGNN with no more than 20 layers can achieve the optimal accuracy, as recorded in Table \ref{Performance}.
In conclusion, these experimental results point out that the proposed AGNN has a powerful ability to mine underlying node embeddings with a deep network architecture.

\begin{figure}[!tbp]
    \centering
    \includegraphics[width=0.48\textwidth]{./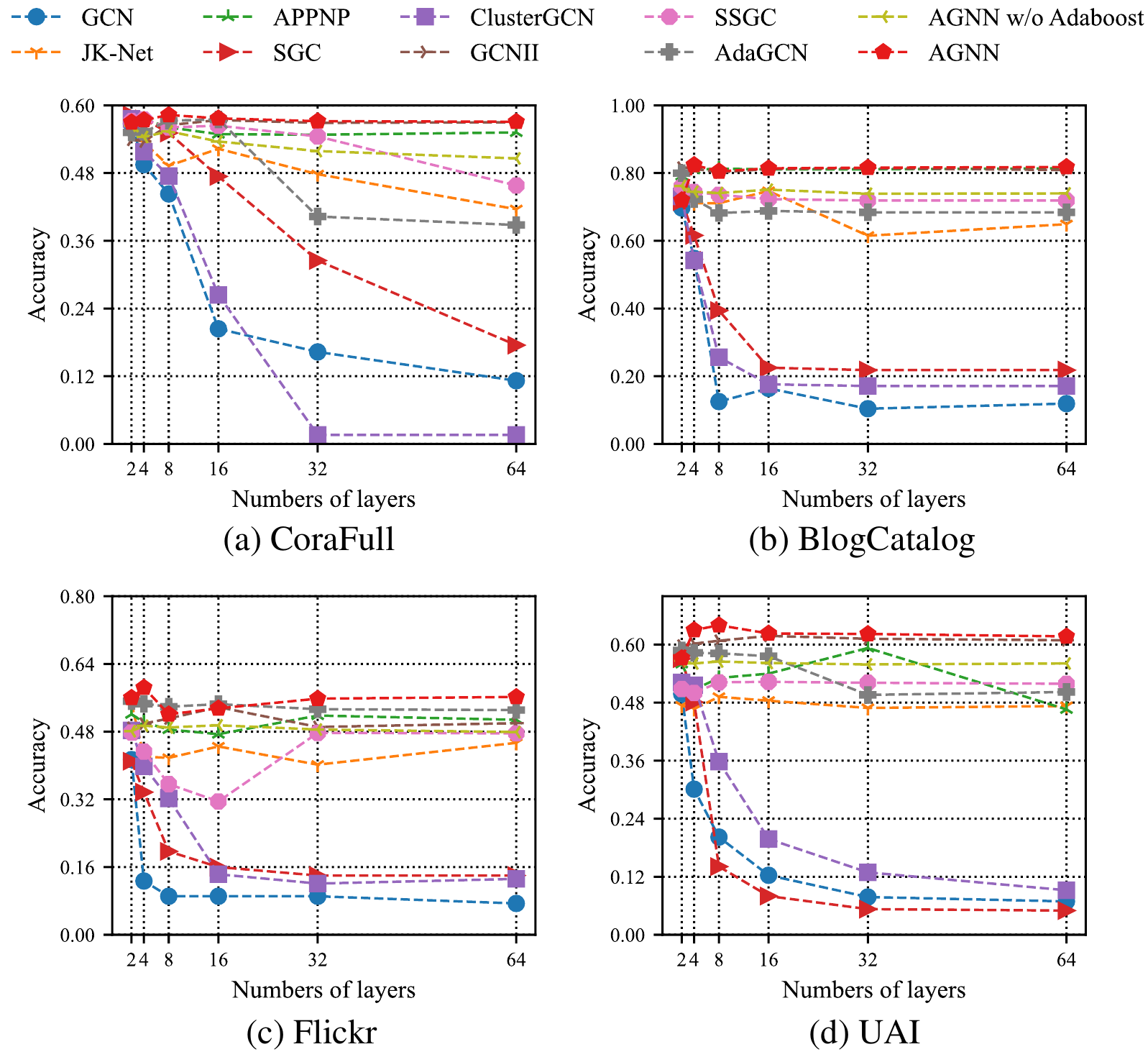}
    \caption{Performance of baselines and the proposed AGNN with 2/4/8/16/32/64 layers.}
    \label{Deeplayers}
\end{figure}

\subsubsection{Weak classifier weight distribution}
In this section, we explore the assigned weights of weak classifiers in the proposed method with varying numbers of layers, as shown in Figure \ref{Weights}.
The weight assignments demonstrate that shallow layers account for a significant portion of final predicted results, indicating that classification problems of most nodes can be effectively solved by extracting representations of one or two hops of neighbors.
In general, the top 4 layers (top 2 blocks) of AGNN play the most critical role in the final prediction, and the rest layers complement the prediction with more high-order information.
Figure \ref{Weights} reveals that AGNN achieves the best performance with 8 layers on both two selected datasets, indicating that multi-layer models are essential for improving accuracy via exploring remote neighbors.
Although Figure \ref{Weights} shows that AGNN with more than 8 layers is not the optimal selection, the improved Adaboost can maintain the classification accuracy of extremely deep networks by assigning tiny weights to deep layers, if most nodes have been correctly classified through shallow layers.
In a word, a multi-layer architecture often benefits the embedding learning, and AGNN attempts to leverage high-order information at the best.

\begin{figure*}[!tbp]
    \centering
    \includegraphics[width=\textwidth]{./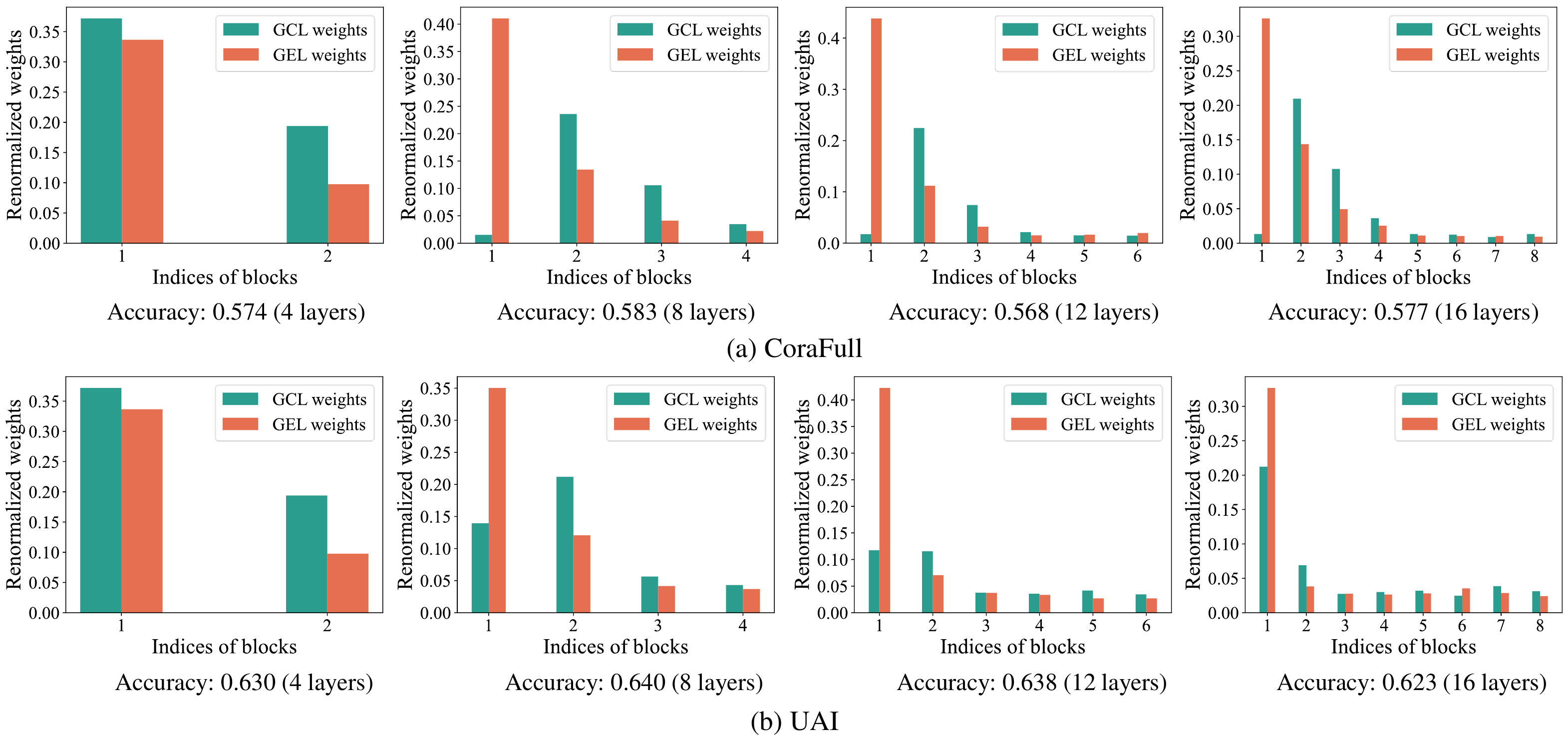}
    \caption{Weight distribution of AGNN with 4/8/12/16 layers for each weak classifier on CoraFull and UAI datasets.}
    \label{Weights}
\end{figure*}

\begin{figure*}[!htbp]
    \centering
    \includegraphics[width=\textwidth]{./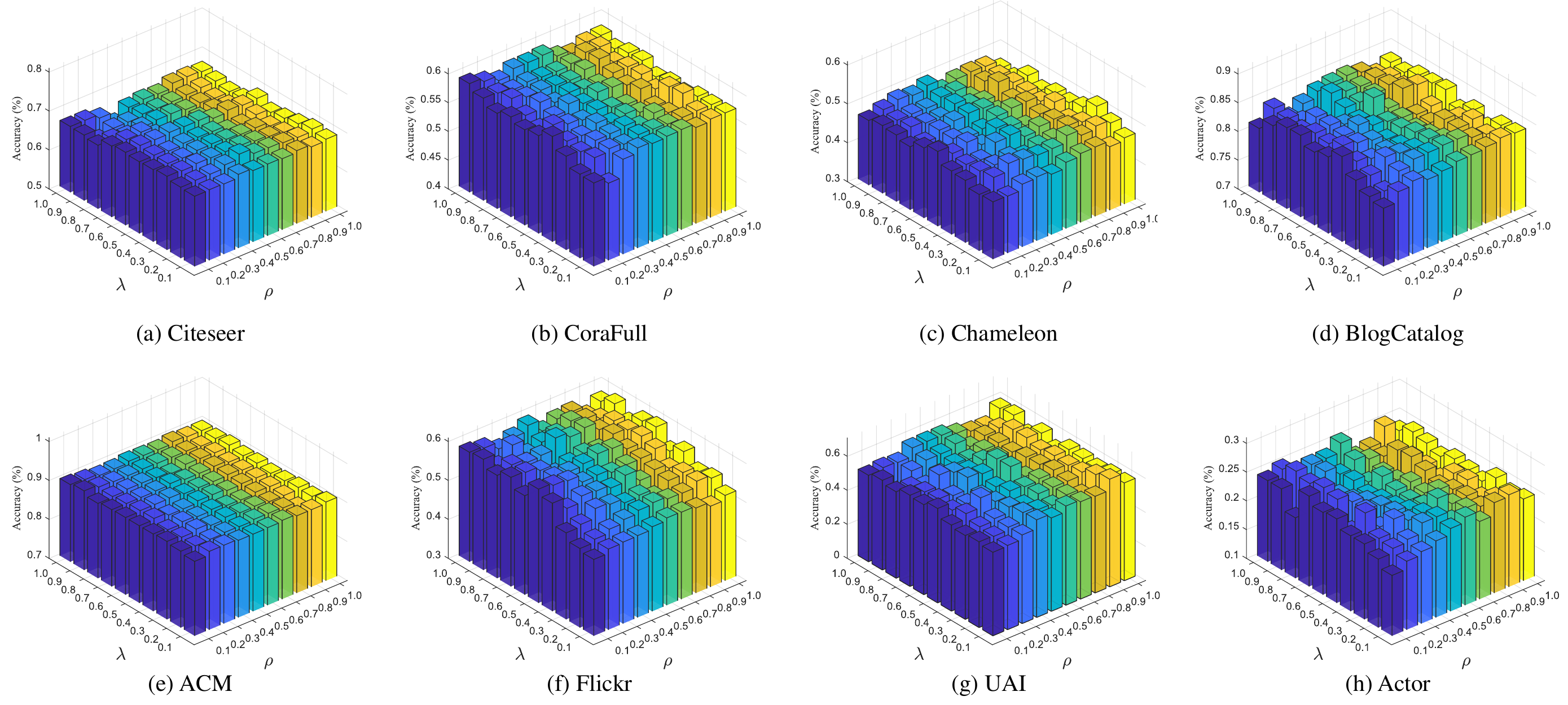}
    \caption{Parameter sensitivity of AGNN w.r.t. $\lambda$ and $\rho$ on various datasets.}
    \label{LambdaRho}
\end{figure*}

\subsubsection{Model analysis}
In this section, we further analyze the proposed model.
First, the impact of hyperparameters used in AGNN is discussed.
The accuracy changes w.r.t. $\lambda$ and $\rho$ on all datasets are demonstrated in Figure \ref{LambdaRho},
from which we find that the performance of AGNN fluctuates marginally and a suitable choice of two parameters is crucial on most datasets.
Overall, AGNN is robust to varied hyperparameters on Citeseer and ACM datasets.
Although the optimal selections of hyperparameters differ on other datasets, small values of $\lambda$ and $\rho$ often lead to undesired performance, especially on CoraFull, Chameleon, UAI and BlogCatalog datasets.
In our previous experiments, we select the optimal combination of these two hyperparameters to obtain better experimental results.

Furthermore, we validate the effectiveness of the designed activation function MSReLU in GEL, as exhibited in Table \ref{ablationAF}.
All parameter settings except those in compared activation functions are the same.
We also evaluate the performance of AGNN with identify function and ReLU function.
It is noted that ReLU only preserves non-zero entries in the matrix.
Experimental results indicate that MSReLU function succeeds in promoting classification accuracy compared with taking other functions as activation functions, attributed to the ability of making sparse outputs closer to original features. 
Sometimes, AGNN with ReLU encounters severe performance decline (e.g., Flickr and UAI datasets).
This is because that it ignores negative entries in the feature matrix, which often results in the information loss.
In a word, these observations suggest that a suitable MSReLU function benefits the learning of more accurate and robust node embeddings.

\begin{table}[!htbp]
    \centering
    \begin{tabular}{l|cccc}
    \toprule 
    Methods / Datasets             & BlogCatalog & Flickr  & UAI   & Chameleon\\ \midrule
    AGNN + IF &  0.805  & 0.568 & 0.615 & 0.450 \\
    AGNN + ST & 0.813       & 0.561   & 0.612   & 0.437\\
    AGNN + ReLU & 0.815  &  0.411 &  0.594 & 0.458 \\
    AGNN + MSReLU  & \textbf{0.824}       & \textbf{0.584}   & \textbf{0.630}   &  \textbf{0.480}\\ \bottomrule
    \end{tabular}
    \caption{Impact of Identity Function (IF), ST, ReLU and MSReLU in GEL, where $\theta = 0.02$ (ST), $\theta_{1} = 0.02$ and $\theta_{2} = 0.04$ (MSReLU). Layer numbers are fixed as 4.}\label{ablationAF}
\end{table}

\subsubsection{Convergence analysis}
Convergence curves of the proposed AGNN on BlogCatalog, Flickr, Actor and Chameleon datasets are demonstrated in Figure \ref{Convergence}.
These curves indicate that loss values of AGNN drop as the number of iterations increases and are finally convergent.
Although loss values may sometimes fluctuate, the overall trends of curves are suggestive of their convergence.
The fluctuation during training is caused by the Adaboost strategy that reassigns sample weights at each iteration.
Nevertheless, loss values are stable and converge eventually.
The figure also shows that AGNN with shallow layers generally converges more quickly than that with deep layers, due to the larger solution space caused by more trainable parameters.
Overall, AGNN with all numbers of layers leads to similar convergent points.
However, AGNN with deeper layers generally reaches lower values of cross-entropy, indicating the ability of exploring multi-hop embeddings.
It is noteworthy that AGNN with deep layers does not always correspond to better convergence, attributed to the various data distributions of different datasets.

\begin{figure*}[!tbp]
    \centering
    \includegraphics[width=\textwidth]{./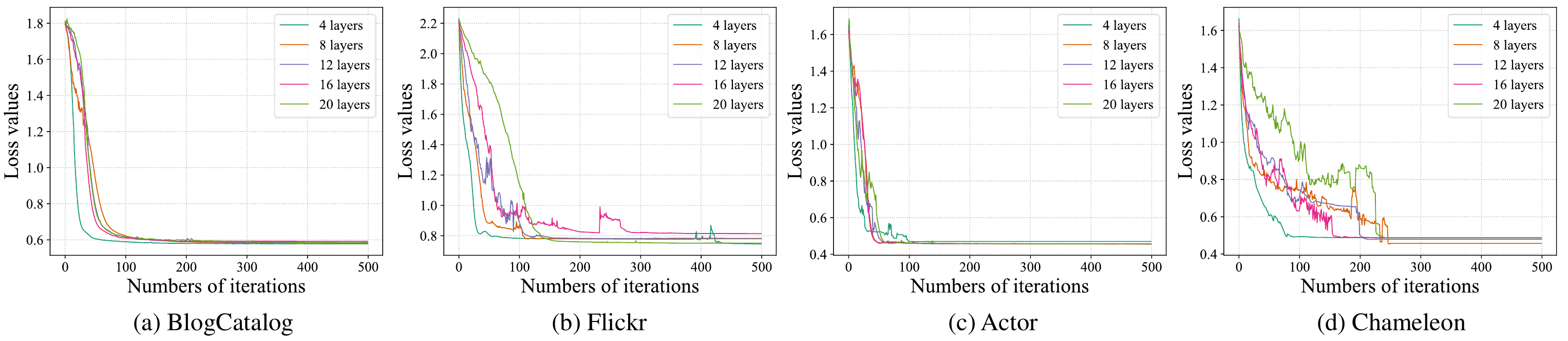}
    \caption{Training convergence curves of AGNN with varying numbers of layers ranging in $\{ 4, 8, \ldots, 20 \}$ on BlogCatalog, Flickr, Actor and Chameleon datasets.}
    \label{Convergence}
\end{figure*}

\section{Conclusion}\label{Conclusion}
In this paper, we proposed an Alternating Graph-regularized Neural Network to improve the performance of GCN in terms of semi-supervised node classification tasks, which coped with the over-smoothing issue that occurred in most GCN-based models.
We first reviewed the concept of GCN and validated that it was an approximation of a graph-regularized optimization problem.
Next, we elaborated on the proposed GEL, which was derived from another graph-regularized optimization objective formulating the transformation from the original feature space to the intermediate graph embedding space at each layer. Therefore, GEL allowed the model to carry low-order information from the input to deep layers.
Theoretically, the proposed AGNN alternately propagated node information on the basis of two graph-constrained problems.
Furthermore, an improved Adaboost strategy was leveraged to integrate hidden graph representations from all layers.
Due to more reliable and distinguishable node embeddings learned from GCL and GEL, this strategy could obtain more accurate predictions.
Extensive experiments validated that the proposed method succeeded in promoting the performance of GCN with deeper layers.
In the future, we will devote ourselves into further investigation of multi-layer GCN with techniques like attention mechanism and residual networks.

\bibliographystyle{ieeetr}
\bibliography{MachineLearning}

\begin{thebibliography}{10}

\bibitem{zhang2021shne}
Z.~Zhang, C.~Chen, Y.~Chang, W.~Hu, X.~Xing, Y.~Zhou, and Z.~Zheng, ``Shne:
  Semantics and homophily preserving network embedding,'' {\em IEEE
  Transactions on Neural Networks and Learning Systems}, 2021.
\newblock doi: 10.1109/TNNLS.2021.3116936.

\bibitem{ZhongW0HDNL021}
H.~Zhong, J.~Wu, C.~Chen, J.~Huang, M.~Deng, L.~Nie, Z.~Lin, and X.~Hua,
  ``Graph contrastive clustering,'' in {\em Proceedings of the 2021 {IEEE/CVF}
  International Conference on Computer Vision}, pp.~9204--9213, 2021.

\bibitem{li2020graph}
J.~Li, Y.~Ma, Y.~Wang, C.~Aggarwal, C.-D. Wang, and J.~Tang, ``Graph pooling
  with representativeness,'' in {\em Proceedings of the 2020 IEEE International
  Conference on Data Mining}, pp.~302--311, 2020.

\bibitem{YanhuiHybrid2022}
Y.-H. Chen, L.~Huang, C.-D. Wang, and J.-H. Lai, ``Hybrid-order gated graph
  neural network for session-based recommendation,'' {\em IEEE Transactions on
  Industrial Informatics}, vol.~18, no.~3, pp.~1458--1467, 2022.

\bibitem{wang2021dualgnn}
Q.~Wang, Y.~Wei, J.~Yin, J.~Wu, X.~Song, and L.~Nie, ``Dualgnn: Dual graph
  neural network for multimedia recommendation,'' {\em IEEE Transactions on
  Multimedia}, 2021.
\newblock doi: 10.1109/TMM.2021.3138298.

\bibitem{deng2022g}
Z.-H. Deng, C.-D. Wang, L.~Huang, J.-H. Lai, and S.~Y. Philip, ``$g^{3}$sr:
  Global graph guided session-based recommendation,'' {\em IEEE Transactions on
  Neural Networks and Learning Systems}, 2022.
\newblock doi: 10.1109/TNNLS.2022.3159592.

\bibitem{YangLLZWL20}
Y.~Yang, H.~Li, X.~Li, Q.~Zhao, J.~Wu, and Z.~Lin, ``Sognet: Scene overlap
  graph network for panoptic segmentation,'' in {\em Proceedings of the 34th
  {AAAI} Conference on Artificial Intelligence}, pp.~12637--12644, 2020.

\bibitem{XuWYHS21}
X.~Xu, T.~Wang, Y.~Yang, A.~Hanjalic, and H.~T. Shen, ``Radial graph
  convolutional network for visual question generation,'' {\em IEEE
  Transactions on Neural Networks and Learning Systems}, vol.~32, no.~4,
  pp.~1654--1667, 2021.

\bibitem{XuHQXHH21}
Y.~Xu, C.~Han, J.~Qin, X.~Xu, G.~Han, and S.~He, ``Transductive zero-shot
  action recognition via visually connected graph convolutional networks,''
  {\em IEEE Transactions on Neural Networks and Learning Systems}, vol.~32,
  no.~8, pp.~3761--3769, 2021.

\bibitem{KipfW17}
T.~N. Kipf and M.~Welling, ``Semi-supervised classification with graph
  convolutional networks,'' in {\em Proceedings of the 5th International
  Conference on Learning Representations}, 2017.

\bibitem{BaiCJRH22}
L.~Bai, L.~Cui, Y.~Jiao, L.~Rossi, and E.~R. Hancock, ``Learning backtrackless
  aligned-spatial graph convolutional networks for graph classification,'' {\em
  IEEE Transactions on Pattern Analysis and Machine Intelligence}, vol.~44,
  no.~2, pp.~783--798, 2022.

\bibitem{ShangYLZ22}
M.~Shang, Y.~Yuan, X.~Luo, and M.~Zhou, ``An
  {\(\alpha\)}-{\(\beta\)}-divergence-generalized recommender for highly
  accurate predictions of missing user preferences,'' {\em IEEE Transactions on
  Cybernetics}, vol.~52, no.~8, pp.~8006--8018, 2022.

\bibitem{9647958}
X.~Luo, H.~Wu, Z.~Wang, J.~Wang, and D.~Meng, ``A novel approach to large-scale
  dynamically weighted directed network representation,'' {\em IEEE
  Transactions on Pattern Analysis and Machine Intelligence}, pp.~1--17, 2021.
\newblock doi: 10.1109/TPAMI.2021.3132503.

\bibitem{9839318}
Y.~Yuan, X.~Luo, M.~Shang, and Z.~Wang, ``A kalman-filter-incorporated latent
  factor analysis model for temporally dynamic sparse data,'' {\em IEEE
  Transactions on Cybernetics}, pp.~1--14, 2022.
\newblock doi: 10.1109/TCYB.2022.3185117.

\bibitem{9889163}
Q.~Wang, H.~Jiang, M.~Qiu, Y.~Liu, and D.~Ye, ``Tgae: Temporal graph
  autoencoder for travel forecasting,'' {\em IEEE Transactions on Intelligent
  Transportation Systems}, pp.~1--13, 2022.
\newblock doi: 10.1109/TITS.2022.3202089.

\bibitem{chen2020general}
B.~Chen, B.~Wu, A.~Zareian, H.~Zhang, and S.-F. Chang, ``General partial label
  learning via dual bipartite graph autoencoder,'' in {\em Proceedings of the
  34th {AAAI} Conference on Artificial Intelligence}, vol.~34,
  pp.~10502--10509, 2020.

\bibitem{CuiZY020}
G.~Cui, J.~Zhou, C.~Yang, and Z.~Liu, ``Adaptive graph encoder for attributed
  graph embedding,'' in {\em Proceedings of the 26th {ACM} {SIGKDD} Conference
  on Knowledge Discovery and Data Mining}, pp.~976--985, 2020.

\bibitem{MinWW20}
Y.~Min, F.~Wenkel, and G.~Wolf, ``Scattering {GCN:} overcoming oversmoothness
  in graph convolutional networks,'' in {\em Advances in Neural Information
  Processing Systems}, vol.~33, pp.~14498--14508, 2020.

\bibitem{XuLTSKJ18}
K.~Xu, C.~Li, Y.~Tian, T.~Sonobe, K.~Kawarabayashi, and S.~Jegelka,
  ``Representation learning on graphs with jumping knowledge networks,'' in
  {\em Proceedings of the 35th International Conference on Machine Learning},
  vol.~80, pp.~5449--5458, 2018.

\bibitem{SunZL21}
K.~Sun, Z.~Zhu, and Z.~Lin, ``Adagcn: Adaboosting graph convolutional networks
  into deep models,'' in {\em Proceedings of the 9th International Conference
  on Learning Representations}, 2021.

\bibitem{LiHW18}
Q.~Li, Z.~Han, and X.~Wu, ``Deeper insights into graph convolutional networks
  for semi-supervised learning,'' in {\em Proceedings of the 32nd {AAAI}
  Conference on Artificial Intelligence}, pp.~3538--3545, 2018.

\bibitem{eliasof2021pde}
M.~Eliasof, E.~Haber, and E.~Treister, ``Pde-gcn: Novel architectures for graph
  neural networks motivated by partial differential equations,'' in {\em
  Advances in Neural Information Processing Systems}, vol.~34, pp.~3836--3849,
  2021.

\bibitem{ChenZXMLZJH21}
H.~Chen, F.~Zhuang, L.~Xiao, L.~Ma, H.~Liu, R.~Zhang, H.~Jiang, and Q.~He,
  ``{AMA-GCN:} adaptive multi-layer aggregation graph convolutional network for
  disease prediction,'' in {\em Proceedings of the 30th International Joint
  Conference on Artificial Intelligence}, pp.~2235--2241, 2021.

\bibitem{ChenWHDL20}
M.~Chen, Z.~Wei, Z.~Huang, B.~Ding, and Y.~Li, ``Simple and deep graph
  convolutional networks,'' in {\em Proceedings of the 37th International
  Conference on Machine Learning}, vol.~119, pp.~1725--1735, 2020.

\bibitem{Li0TG19}
G.~Li, M.~M{\"{u}}ller, A.~K. Thabet, and B.~Ghanem, ``Deepgcns: Can gcns go as
  deep as cnns?,'' in {\em Proceedings of 2019 {IEEE/CVF} International
  Conference on Computer Vision}, pp.~9266--9275, 2019.

\bibitem{KlicperaBG19}
J.~Klicpera, A.~Bojchevski, and S.~G{\"{u}}nnemann, ``Predict then propagate:
  Graph neural networks meet personalized pagerank,'' in {\em Proceedings of
  the 7th International Conference on Learning Representations}, 2019.

\bibitem{xu2021deep}
K.~Xu, H.~Huang, P.~Deng, and Y.~Li, ``Deep feature aggregation framework
  driven by graph convolutional network for scene classification in remote
  sensing,'' {\em IEEE Transactions on Neural Networks and Learning Systems},
  2021.
\newblock doi: 10.1109/TNNLS.2021.3071369.

\bibitem{HeSJ0ZYZ20}
D.~He, Y.~Song, D.~Jin, Z.~Feng, B.~Zhang, Z.~Yu, and W.~Zhang,
  ``Community-centric graph convolutional network for unsupervised community
  detection,'' in {\em Proceedings of the 29th International Joint Conference
  on Artificial Intelligence}, pp.~3515--3521, 2020.

\bibitem{YuQ20}
W.~Yu and Z.~Qin, ``Graph convolutional network for recommendation with
  low-pass collaborative filters,'' in {\em Proceedings of the 37th
  International Conference on Machine Learning}, vol.~119, pp.~10936--10945,
  2020.

\bibitem{Gan2022Multigraph}
J.~Gan, R.~Hu, Y.~Mo, Z.~Kang, L.~Peng, Y.~Zhu, and X.~Zhu, ``Multigraph fusion
  for dynamic graph convolutional network,'' {\em IEEE Transactions on Neural
  Networks and Learning Systems}, 2022.
\newblock doi: 10.1109/TNNLS.2022.3172588.

\bibitem{ChenLPLZY21}
L.~Chen, J.~Li, Q.~Peng, Y.~Liu, Z.~Zheng, and C.~Yang, ``Understanding
  structural vulnerability in graph convolutional networks,'' in {\em
  Proceedings of the 30th International Joint Conference on Artificial
  Intelligence}, pp.~2249--2255, 2021.

\bibitem{xu2020spatiotemporal}
Z.~Xu, Y.~Kang, Y.~Cao, and Z.~Li, ``Spatiotemporal graph convolution
  multifusion network for urban vehicle emission prediction,'' {\em IEEE
  Transactions on Neural Networks and Learning Systems}, vol.~32, no.~8,
  pp.~3342--3354, 2021.

\bibitem{TangSMXYL20}
Z.~Tang, Y.~Shen, X.~Ma, W.~Xu, J.~Yu, and W.~Lu, ``Multi-hop reading
  comprehension across documents with path-based graph convolutional network,''
  in {\em Proceedings of the 29th International Joint Conference on Artificial
  Intelligence}, pp.~3905--3911, 2020.

\bibitem{0002C0S21}
M.~Jin, H.~Chang, W.~Zhu, and S.~Sojoudi, ``Power up! robust graph
  convolutional network via graph powering,'' in {\em Proceedings of the 35th
  {AAAI} Conference on Artificial Intelligence}, pp.~8004--8012, 2021.

\bibitem{ZhuK21}
H.~Zhu and P.~Koniusz, ``Simple spectral graph convolution,'' in {\em
  Proceedings of the 9th International Conference on Learning Representations},
  2021.

\bibitem{MalikUHKA21}
O.~A. Malik, S.~Ubaru, L.~Horesh, M.~E. Kilmer, and H.~Avron, ``Dynamic graph
  convolutional networks using the tensor m-product,'' in {\em Proceedings of
  the 2021 {SIAM} International Conference on Data Mining}, pp.~729--737, 2021.

\bibitem{ChiangLSLBH19}
W.~Chiang, X.~Liu, S.~Si, Y.~Li, S.~Bengio, and C.~Hsieh, ``Cluster-gcn: An
  efficient algorithm for training deep and large graph convolutional
  networks,'' in {\em Proceedings of the 35th {ACM} {SIGKDD} International
  Conference on Knowledge Discovery {\&} Data Mining}, pp.~257--266, 2019.

\bibitem{WuSZFYW19}
F.~Wu, A.~H.~S. Jr., T.~Zhang, C.~Fifty, T.~Yu, and K.~Q. Weinberger,
  ``Simplifying graph convolutional networks,'' in {\em Proceedings of the 36th
  International Conference on Machine Learning}, vol.~97, pp.~6861--6871, 2019.

\bibitem{cui2021learning}
L.~Cui, L.~Bai, X.~Bai, Y.~Wang, and E.~R. Hancock, ``Learning aligned vertex
  convolutional networks for graph classification,'' {\em IEEE Transactions on
  Neural Networks and Learning Systems}, 2021.
\newblock doi: 10.1109/TNNLS.2021.3129649.

\bibitem{DBLP:conf/icml/GregorL10}
K.~Gregor and Y.~LeCun, ``Learning fast approximations of sparse coding,'' in
  {\em Proceedings of the 27th International Conference on Machine Learning},
  pp.~399--406, 2010.

\bibitem{DefferrardBV16}
M.~Defferrard, X.~Bresson, and P.~Vandergheynst, ``Convolutional neural
  networks on graphs with fast localized spectral filtering,'' in {\em Advances
  in Neural Information Processing Systems}, vol.~29, pp.~3837--3845, 2016.

\bibitem{HamiltonYL17}
W.~L. Hamilton, Z.~Ying, and J.~Leskovec, ``Inductive representation learning
  on large graphs,'' in {\em Advances in Neural Information Processing
  Systems}, vol.~30, pp.~1024--1034, 2017.

\bibitem{VelickovicCCRLB18}
P.~Velickovic, G.~Cucurull, A.~Casanova, A.~Romero, P.~Li{\`{o}}, and
  Y.~Bengio, ``Graph attention networks,'' in {\em Proceedings of the 6th
  International Conference on Learning Representations}, 2018.

\end{thebibliography}

\end{document}